\newtheorem{lemma}{Lemma}
\newtheorem{theorem}{Theorem}
\begin{document}

\title{Learn from the Past: A Proxy Guided Adversarial Defense Framework with Self Distillation Regularization}

\author{Yaohua Liu, Jiaxin Gao, Xianghao Jiao, Zhu Liu, Xin Fan,~\IEEEmembership{Senior Member,~IEEE,} and Risheng Liu,~\IEEEmembership{Member,~IEEE}
        % <-this % stops a space
\thanks{This work was supported in part by the National Key R\&D Program of China under Grant 2022YFA1004101, in part by the National Natural Science Foundation of China under Grant U22B2052 and Grant 62027826, and in part by Liaoning Revitalization Talents Program under Grant 2022RG04. (Corresponding author: Risheng liu.)}
\thanks{Yaohua Liu, Jiaxin Gao, Xianghao Jiao and Zhu Liu are with the School of Software Technology, Dalian University of Technology, Dalian, Liaoning, 116024, China. (e-mail: liuyaohua\_918@163.com; jiaxinn.gao@outlook.com; jiaoxh0331@outlook.com; liuzhu@mail.dlut.edu.cn). }
\thanks{Xin Fan and Risheng Liu are with the School of Software Technology, Dalian University of Technology, Dalian, Liaoning, 116024, China. (e-mail: xin.fan@dlut.edu.cn; rsliu@dlut.edu.cn).}% <-this % stops a space
%\thanks{Manuscript received April 19, 2021; revised August 16, 2021.}}

% The paper headers
}
\markboth{Journal of \LaTeX\ Class Files,~Vol.~14, No.~8, August~2021}%
{Shell \MakeLowercase{\textit{et al.}}: A Sample Article Using IEEEtran.cls for IEEE Journals}

%\IEEEpubid{0000--0000/00\$00.00~\copyright~2021 IEEE}
% Remember, if you use this you must call \IEEEpubidadjcol in the second
% column for its text to clear the IEEEpubid mark.

\maketitle

\begin{abstract}
Adversarial Training (AT), pivotal in fortifying the robustness of deep learning models, is extensively adopted in practical applications. However, prevailing AT methods, relying on direct iterative updates for target model's defense, frequently encounter obstacles such as unstable training and catastrophic overfitting. In this context, our work illuminates the potential of leveraging the target model's historical states as a proxy to provide effective initialization and defense prior, which results in a general proxy guided defense framework, `LAST' ({\bf L}earn from the P{\bf ast}). Specifically, LAST derives response of the proxy model as dynamically learned fast weights, which continuously corrects the update direction of the target model. Besides, we introduce a self-distillation regularized defense objective, ingeniously designed to steer the proxy model's update trajectory without resorting to external teacher models, thereby ameliorating the impact of catastrophic overfitting on performance. Extensive experiments and ablation studies showcase the framework's efficacy in markedly improving model robustness (e.g., up to 9.2\% and 20.3\% enhancement in robust accuracy on CIFAR10 and CIFAR100 datasets, respectively) and training stability. These improvements are consistently observed across various model architectures, larger datasets, perturbation sizes, and attack modalities, affirming LAST's ability to consistently refine both single-step and multi-step AT strategies. The code will be available at~\url{https://github.com/callous-youth/LAST}.
\end{abstract}

\begin{IEEEkeywords}
Adversarial training, proxy guided, adversarial defense, self distillation.
\end{IEEEkeywords}

\section{Introduction}

\IEEEPARstart{I}{n} the context of deep learning models and their widespread deployment in real-world applications~\cite{krizhevsky2012imagenet,jian2016deep,huang2023t}, there is a growing recognition of the susceptibility of these models to subtle adversarial perturbations in input data~\cite{kurakin2018adversarial,carlini2017adversarial,gu2023survey}. The introduction of perturbed adversarial samples can lead to the target model producing specified or alternative erroneous predictions, thus jeopardizing the functionality of real-world surveillance~\cite{dai2018cross}, autonomous driving systems~\cite{szegedy2013intriguing}, and giving rise to critical safety concerns. Consequently, the enhancement of model robustness against adversarial samples generated by various attacks has emerged as a focal research topic in the current landscape~\cite{papernot2016distillation,chen2020robust,latorre2023finding}.

While diversified defense techniques~\cite{zhang2019theoretically,dong2020adversarial} have been explored to mitigate adversarial attacks, Adversarial Training (AT)~\cite{madry2017towards,shafahi2019adversarial} is widely acknowledged as among the most efficacious strategies, of which the essence lies in addressing the min-max optimization problem. In line with the iterative format of adversarial attacks~\cite{rebuffi2022revisiting,yuan2021meta}, Standard AT (SAT) methods principally encompasses both single-step~\cite{goodfellow2014explaining} and multi-step~\cite{madry2017towards} training paradigms.  Broadly speaking, single-step AT, (e.g., Fast-AT~\cite{wong2020fast}), prioritizes enhancing computational efficiency to facilitate rapid solution convergence, whereas multi-step AT (e.g., PGD-AT~\cite{pang2020bag}) is mainly dedicated to augmenting robustness in the context of larger datasets and increased training expenditure. On top of that, several lines of works have explored heuristic defense techniques to enhance the defense process~\cite{jia2022adversarial,xu2023exploring}, e.g., introducing specialized regularization terms~\cite{zhang2019theoretically} or additional robust teacher models~\cite{pang2020bag,dong2022label}. YOPO employs the Pontryagin's Maximum Principle (PMP) to decouple the adversary update and back-propagation processes in AT. Fast-AT-GA~\cite{andriushchenko2020understanding} introduces a new regularization term, which maximizes gradient alignment within the perturbation set, improving Fast-AT's robustness and reducing the performance gap with more complex adversarial training methods. Fast-BAT~\cite{zhang2022revisiting} refines the process of AT by employing a bi-level optimization strategy and implicit gradient computation to make the training more robust. KD-AT proposes mitigating this label noise by recalibrating the labels assigned to adversarial examples, using the predictive label distribution from an adversarially trained model.

To summarize, these methods essentially spared efforts to introduce additional prior knowledge or design new learning strategies to assist the current state of target model itself to defend the adversarial example with explicit or implicit cost. Whereas, existing methods based on SAT always suffer from the unstable convergence behavior of target model~\cite{dong2022label, nakkiran2021deep}. Meanwhile, catastrophic overfitting~\cite{li2020towards} problem also  limits the robustness improvement, which refers to significant performance decrease during the training process. The adversarial perturbations, crafted within the attack's inner maximization loop of SAT, leverage the gradient data of the target model's current parameters, enhancing their capacity to jeopardize the target model.  

\begin{figure*}[!t]
	%	\vspace{-0.8cm}
	\begin{center}
		%		\renewcommand\arraystretch{0.5}
		%		\begin{tabular}{@{\extracolsep{-0.3em}}c@{\extracolsep{-0.1em}}}
			\includegraphics[height=4.0cm,width=18.0cm,trim=10 20 0 20,clip]{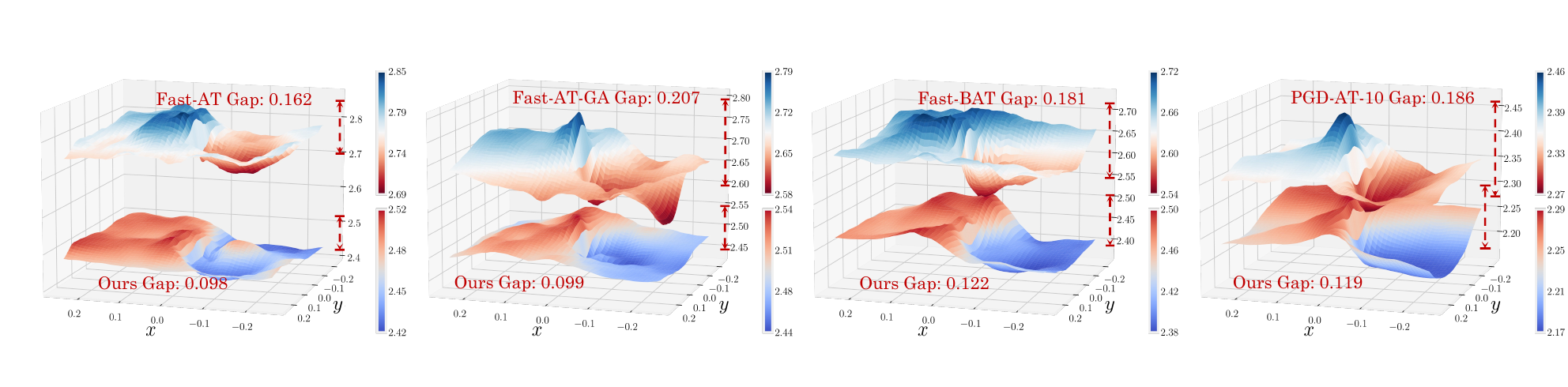}\\
			\footnotesize (a) Fast-AT\quad\quad\quad\quad\quad\quad\quad\quad\quad \footnotesize (b) Fast-AT-GA	\quad\quad\quad\quad\quad\quad\quad\quad\quad \footnotesize (c) Fast-BAT	\quad\quad\quad\quad\quad\quad\quad\quad \footnotesize (d) PGD-AT-10\quad \\ 
		\end{center}
		\caption{Comparison of the model's adversarial loss landscape trained by original SAT methods and their improved version of LAST. We also report the gap of maxi- and minimum losses for all the landscapes with $x, y\in[-0.25,0.25]$. Note that we have also labeled the loss range for the upper and lower surfaces on the left side of the axis. The models trained by LAST exhibit significantly lower loss, smoother loss landscapes along with smaller loss gaps.
		}\label{fig:loss_landscape}
	\end{figure*} 
	
In this work, we emphasize that during the defense process,  the historical parameter states of target model and their corresponding gradient directions remain concealed, thus serving as unknown elements to the adversarial examples and providing defense prior. From this new perspective, we establish a universal adversarial defense framework named `LAST' ({\bf L}earn from the P{\bf ast}), which ingeniously incorporates the model's historical states as the proxy model to fortify defense mechanisms. Specifically, we first generate response based on the proxy model to derive dynamically learned fast weights, which was used to calculate the differential unit as the corrected gradient direction for target model's subsequent update. Besides, we propose a self-distillation regularized defense objective that directs the update trajectory of the proxy model without necessitating additional teacher models and effectively alleviates the impacts of catastrophic overfitting on performance. Experimentally, across datasets of varying scales (i.e., CIFAR10, CIFAR100, and TinyImagenet dataset), and employing diverse network architectures (i.e., PreActResNet-18 and WideResNet-34-10), we have conducted comprehensive evaluations to substantiate the consistent and significant enhancement that the LAST framework brings to both training stability and ultimate robustness. These enhancements are evidenced through improvements in single-step and multi-step AT based on SAT formulation. 

\textbf{Visualization of Adversarial Landscapes.} Especially, in Fig.~\ref{fig:loss_landscape}, we compare the adversarial loss landscape~\cite{liu2020loss}  w.r.t. input variations between four SAT methods including Fast-AT, Fast-AT-FA, Fast-BAT, PGD-AT and their improved version under the LAST framework trained using PreActResNet-18 backbone with $\boldsymbol{\epsilon}=8/255$ on the CIFAR10 dataset. The adversarial loss is calculated with $\mathcal{L}_{\mathtt{atk}}(\boldsymbol{u} + x\boldsymbol{\vec{\iota}}+y\boldsymbol{\vec{o}})$, where $\boldsymbol{u}$ denotes the clean image, $\boldsymbol{\vec{\iota}}=\mathtt{sgn}(\nabla_{\mathbf{I}}\mathcal{L}_{\mathtt{atk}}(\boldsymbol{u})$ and $\boldsymbol{\vec{o}}\sim \textrm{Rademacher}(0,0.5)$ are the $sign$ gradient direction and random direction used to introduce the perturbation ($x$ and $y$ are the corresponding linear coefficients). We also calculated the loss gaps between the maxi- and minimum losses within the range of $x, y\in[-0.25,0.25]$. Typically speaking, these models with stronger defense capabilities are less sensitive to the adversarial example generated by add mixed perturbation along the gradient direction and random direction, thereby exhibiting lower overall loss value, relatively smoother loss landscapes along with smaller loss gap. \textbf{As it can be observed, the loss landscapes of models trained by the LAST framework show lower loss, smoother surfaces and smaller loss gaps, which exhibit enhanced robustness against both adversarial and random perturbations of different scales}. We summarize our main contributions as follows.

\begin{enumerate}
	\item \textit{New perspective.} Diverging from existing approaches that depend solely on the target model for adversarial defense, this study pioneers the integration of the historical states' response, which leverages the enhanced initial values and defensive prior to defend.
	
	\item \textit{General Defense Framework.} We  develop a universal adversarial defense framework, LAST, which incorporates the historical states of the target model as a proxy model and continuously adjust the update direction of the target model with dynamically learned fast weights. This framework consistently improves existing single-step and multi-step AT methods, offering an alternative to commonly used SAT pipeline.
	
	%	\item As one of the most significant features in this paper, we revisit the deficiencies of SAT from the perspective of its optimization trajectories, and introduce the historical state of the target model as its proxy model for gradient estimation. Then we construct a simple but much effective two-stage adversarial defense framework, which has great potential to serve as an alternative of SAT to improve the model robustness. 
	\item \textit{Self Distillation Defense.} Leveraging the introduced proxy model, we further design a self-distillation regularized defense objective which constrains the update direction of the proxy model without external teacher models, effectively mitigating the impact of catastrophic overfitting.
	
	\item \textit{Extensive Evaluation.} We implement the LAST framework based on various SAT methods, and verify its consistent performance improvement (e.g., up to $\bf 9.2\%$ and $\bf 20.3\%$ increase of RA compared with PGD-AT under AutoAttack ($\boldsymbol{\epsilon}=16/255$) on CIFAR10 and CIFAR100 datasets, respectively) with different backbones, datasets, attack modalities, and also demonstrate its ability to enhance training stability and ameliorate overfitting issues. 
\end{enumerate}

The subsequent content of this work is organized as follows. Sec.~\ref{sec:related_works} delves into adversarial attacks and defenses, along with a review of related work that leverages the historical states of parameters. In Sec.~\ref{sec:LAST}, we introduce the fundamentals of adversarial training, the motivation for referencing historical states, the construction of the LAST defense framework, and provide pertinent theoretical analyses and discussions. Sec.~\ref{sec:experiments} describes the implementation details such as experimental parameters, quantifies and visualizes the experimental results, and discusses the ablation study. Sect.~\ref{sec:conclusion} concludes this work.
	
\section{Related Works}\label{sec:related_works}

\subsection{Adversarial Attack} 

Generally speaking, two branches of adversarial attacks have been well explored including white-box and black-box attacks~\cite{rebuffi2022revisiting}. Black-box attacks refer to these attack strategies where the attacker has limited knowledge about the target model. Specifically, the attacker does not have access to the model's internal parameters, architecture, or training data. Here we focus on the white-box gradient-based adversarial attacks~\cite{yuan2021meta}, which possess full knowledge of the internal structure and parameters of the target deep learning model and leverage gradient information to craft adversarial samples. Specifically, single-step attack methods, e.g. Fast Gradient Sign Method (FGSM)~\cite{goodfellow2014explaining}, generates adversarial examples through a single, small perturbation of input data to produce erroneous classification results or misleading outputs in a single step. The single-step attack method could be naturally extended to the multi-step version by iterative optimization with small step size, e.g., BIM~\cite{kurakin2018adversarial} attack. Then PGD~\cite{madry2017towards} attack improves BIM attack with more attack steps and random initialization of the perturbation. Besides, C\&W attack generates adversarial examples that can deceive neural networks into misclassifying inputs with minimal perturbations. On top of that, AutoAttack~\cite{croce2020reliable} notably includes the Carlini \& Wagner (C\&W) attack and the DLR loss, a refined adversarial objective that enhances the attack's ability to exploit model vulnerabilities. By combining these sophisticated techniques, AutoAttack has been well recognized to offer a more robust and reliable benchmarking tool for assessing the defense mechanisms against adversarial threats. Under the proposed LAST framework, both single-step and multi-step attack strategies that can be used for AT are equally applicable and used to train more robust defense models.

\subsection{Adversarial Defense} 

Different branches of adversarial defense methods~\cite{zhang2019theoretically,dong2020adversarial} have been developed to enhance robustness of deep learning models against attacks, such as the preprocessing based methods~\cite{liao2018defense,jiao2023pearl} and training provably robust networks~\cite{dvijotham2018training,wong2018provable}. Among these defense methods, AT~\cite{madry2017towards} is widely recognized to be one of the most effective strategies. Based on the min-max formulation of SAT, single-step AT methods such as Fast-AT~\cite{wong2020fast} are proposed to implement computation-efficient fast training. Fast-AT-GA~\cite{andriushchenko2020understanding} adopts implicit GA regularization which yields better performance than Fast-AT. In recent works, Fast-BAT~\cite{zhang2022revisiting} incorporates the Implicit Gradient (IG) to estimate the hyper gradient based on the Bilevel Optimization (BLO) formulation and obtains the state-of-the-art performance. In addition, PGD based AT methods~\cite{pang2020bag,wang2019improving} have been continuously improved from different aspects by introducing heuristic techniques~\cite{zhang2020attacks,carlini2022certified} and prior knowledge~\cite{chen2020robust,dong2022label,latorre2023finding}. MART~\cite{wang2019improving} enhances adversarial robustness by distinctly treating misclassified and correctly classified examples during training, emphasizing the significant impact of the former on the final robustness. TRADES~\cite{zhang2019theoretically} is further proposed aimed for trade-off between robustness and natural accuracy. DyART~\cite{xu2023exploring} explores the decision boundary to mitigate conflicting decision boundary dynamics, leading to improved model robustness by directly manipulating training data margins different from SAT methods. Furthermore, RobustBench~\cite{croce2021robustbench} has been developed as a standardized benchmark to provide unified access to diversified robust models targeted at the image classification tasks. To improve the stability of training process as well as the final robustness performance, we establish the LAST framework by introducing history state prior for defense, which can consistently improve the above SAT-based defense methods.

In addition, several works also focus on the catastrophic overfitting problem during AT process and seek effective solutions. Typically, Rice et al.~\cite{rice2020overfitting} investigates overfitting in adversarially robust deep learning and demonstrates that early stopping during training mitigates this overfitting, matching or surpassing the performance improvements of more complex algorithmic advancements. However, this strategy has not entirely resolved the issue of crashes occurring during the training process. Besides, Kim et al.~\cite{kim2021understanding} proposes to prevent catastrophic overfitting during single-step adversarial training by adjusting the magnitude of perturbations for each image, thereby preserving the model's robustness against adversarial attacks without inducing decision boundary distortion. Dong et al.~\cite{dong2021exploring} explores the memorization effect in adversarial training (AT) and devise a new mitigation algorithm based on the analysis of memorization influence. Pang et al.~\cite{pang2022robustness} proposes to redefine robust error in adversarial training by replacing the inductive bias of local invariance with local equivariance. In contrast to the necessity of devising distinct algorithms or incorporating additional priors as suggested by the aforementioned approaches, the propose LAST framework alleviates the risk of catastrophic overfitting and enhances ultimate performance without introducing additional teacher models. This is accomplished by simply utilizing a new defense objective function based on the proxy model and the principle of self-distillation.

\subsection{Learning with Historical Parameter States} 

In addition, a few works have also explored the prior knowledge hidden in the historical weights of deep learning models to facilitate the training process, such as designing retrospective loss~\cite{jandial2020retrospective} or continuously averaging the historical weights, known as the Stochastic Weights Averaging (SWA) technique~\cite{izmailov2018averaging}, which have been applied in different fields~\cite{gowal2020uncovering}. Fast-SWA~\cite{athiwaratkun2018there} averages the weights of the network across different training epochs to achieve significantly improved generalization performance on semi-supervised learning tasks. Besides, SWALP~\cite{yang2019swalp} is proposed to average the model weights at low precision training environments, which has been shown to reduce the quantization noise and enhance the performance of neural networks. In comparison with these techniques applying averaged historical weights for different learning tasks, the proxy model under our LAST framework is introduced to dynamically generates fast responses to defend against attacks tailored for the current state of target model.

\section{Enhance Robustness with the LAST Framework}~\label{sec:LAST}
In this section, we first revisit the general min-max formulation of the SAT process. Then we clarify our motivation with detailed analysis of the input gradients to unveil the defense prior information hidden in the historical states. Next, we further introduce proxy model which is used to generate dynamically learned fast weights for defense, leading to the general LAST framework. Finally, we provide the theoretical basis and more discussion to analyze the effectiveness of this proxy-guided update rule.

\begin{figure*}[!t]
	%			\vspace{-0.3cm}
	\begin{center}
		%		\begin{tabular}{@{\extracolsep{-0.1em}}c@{\extracolsep{0.5em}}c@{\extracolsep{-0.1em}}c@{\extracolsep{-0.1em}}}
			\includegraphics[height=5.4cm,width=17.cm,trim=0 0 0 0,clip]{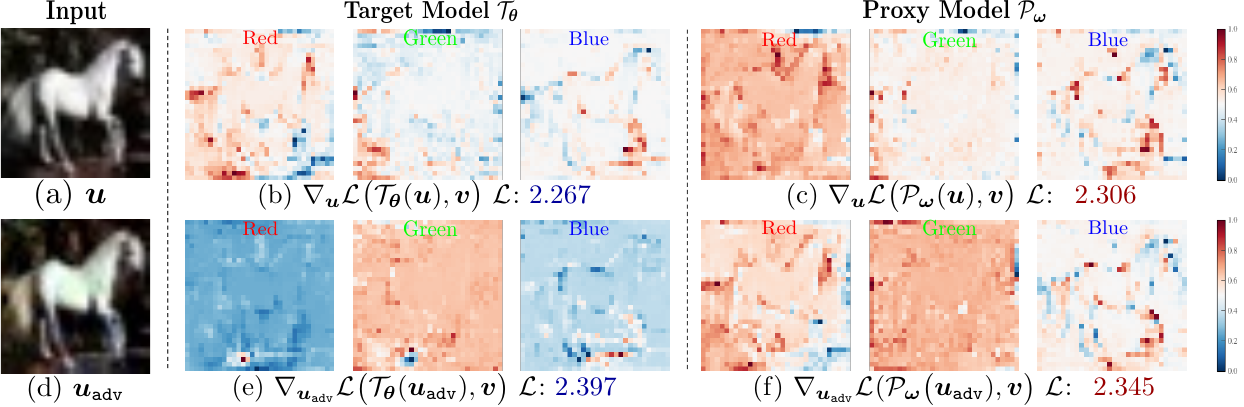}\\
	\end{center}
	\caption{Comparison of heat map of input gradient  w.r.t. the clean example $\boldsymbol{u}$ and adversarial example $\boldsymbol{u}_{\mathtt{adv}}$ between the target model $\mathcal{T}_{\boldsymbol{\theta}}$ and the introduced proxy model $\mathcal{P}_{\boldsymbol{\omega}}$. As it is shown, $\mathcal{P}_{\boldsymbol{\omega}}$ exhibits much less gradient variation in Red, Green and Blue channels.  Besides, it also has less growth of loss, i.e., (c)$\rightarrow$(f) and more salient input gradient w.r.t. $\boldsymbol{u}_{\mathtt{adv}}$ around the shape of the horse compared with $\mathcal{T}_{\boldsymbol{\theta}}$, i.e., (b)$\rightarrow$(e).}\label{fig:input_gradient}
\end{figure*}

\subsection{Preliminaries}
Generally speaking, SAT could be formulated as the min-max optimization problem~\cite{madry2017towards}, where the generated perturbation and defense model (i.e., target model) are alternatively optimized to improve the robustess of target model.  Here we first define the training dataset and input data pair  as $\mathcal{D}=\{\boldsymbol{u}_{i},\boldsymbol{v}_{i}\}_{i=1}^{\mathcal{M}}$, and denote the target model as $\mathcal{T}_{\boldsymbol{\theta}}$ parameterized by $\boldsymbol{\theta}$.  Then a general-purpose SAT formulation could be written as 
\begin{equation}
	\underset{\boldsymbol{\theta}}{\operatorname{min}} \text{ }\mathbb{E}_{\{\boldsymbol{u}_i, \boldsymbol{v}_i\} \in \mathcal{D}}\left[\underset{\boldsymbol{\delta}\in\mathcal{S}}{\operatorname{max}} \text{ }\mathcal{L}\bigl(\mathcal{T}_{\boldsymbol{\theta}}(\boldsymbol{u}_{i}+\boldsymbol{\delta}),\boldsymbol{v}_{i}\bigr)\right],
\end{equation} 
where $\boldsymbol{\delta}$ is the perturbation subject to $\mathcal{S}=\{\boldsymbol{\delta}|\|\boldsymbol{\delta}\|_{\boldsymbol{\rho}} \leq \boldsymbol{\epsilon}\}$ with $\boldsymbol{\epsilon}$-toleration $\boldsymbol{\rho}$ norm. The attack and defense objective, $\mathcal{L}_{\mathtt{atk}}$ and $\mathcal{L}_{\mathtt{def}}$ usually adopt the same form, denoted as $\mathcal{L}$. Typically, $\boldsymbol{\delta}$ is generated by $K$-step maximization of the attack objective following 
\begin{equation}
	\begin{aligned}
		\boldsymbol{\delta}_{k+1} \leftarrow &\mathtt{\Pi}_{\boldsymbol{\epsilon}}\bigl(\boldsymbol{\delta}_k + \boldsymbol{\alpha} \cdot \mathtt{sgn} \nabla_{\boldsymbol{\delta}} \mathcal{L}(\mathcal{T}_{\boldsymbol{\theta}}(\boldsymbol{u}_i+\boldsymbol{\delta}),\boldsymbol{v}_i)\bigr), \\
		&k =0,1,\cdots,K-1.
	\end{aligned}\label{eq:delta}
\end{equation}
where $\boldsymbol{\alpha}$ denotes the attack step size, $\mathtt{\Pi}$ and $\mathtt{sgn} $ are the projection and element-wise $sign$ operation. $\boldsymbol{\delta}_{0}$ is uniformly initialized from $(-\boldsymbol{\epsilon}, \boldsymbol{\epsilon})$. As for the  defense process, the target model performs gradient descent by
\begin{equation}
	\boldsymbol{\theta}_{i+1}=\boldsymbol{\theta}_{i}-\boldsymbol{\beta}\cdot\nabla_{\boldsymbol{\theta}} \mathcal{L}\bigl(\mathcal{T}_{\boldsymbol{\theta}}\left(\boldsymbol{u}_i+\boldsymbol{\delta}_{K}\right), \boldsymbol{v}_i\bigr).\label{eq:defense_theta}
\end{equation}
The above update rule of SAT could be described in the subfigure (a) of Fig.~\ref{fig:pipeline}. Following this update rule for SAT, different lines of works have been made to introduce extra prior or design new learning strategies along with explicit or implicit computation cost.

\subsection{Motivation}
In essence, existing methods all follow the standard update rule bv assisting the current state of target model $\boldsymbol{\theta}_{i}$ itself to defend the adversarial perturbation $\boldsymbol{\delta}_{K}$. Whereas, the perturbation is continuously optimized with full consideration of the current parameter state of the target model throughout the attack process of SAT, which leads to great vulnerability to the adversarial example. Whereas, \textbf{the historical states of the target model and its gradient information is inaccessible to the attack process, which is of great value to provide better initialization states and prediction prior information for defense process.}

To further validate this idea, we sample the clean example $\boldsymbol{u}$ from CIFAR10 dataset, and generate the adversarial example $\boldsymbol{u}_{\mathtt{adv}}$, i.e., $\boldsymbol{u}_{\mathtt{adv}}=\boldsymbol{u} + \boldsymbol{\delta}_{K}$ with Eq.~\eqref{eq:delta}, where $\boldsymbol{\delta}_{K}$ is iterative generated with PreActResNet-18 model under PGD-10 attack, $\boldsymbol{\epsilon}=8/255$.  We adopt the target model trained with early stopping, denoted as $\mathcal{T}_{\boldsymbol{\theta}}$, and define the historical state of $\mathcal{T}_{\boldsymbol{\theta}}$ as proxy model, written as $\mathcal{P}_{\boldsymbol{\omega}}$. Then $\boldsymbol{u}_{\mathtt{adv}}$ is used as input to attack both $\mathcal{T}_{\boldsymbol{\theta}}$ and $\mathcal{P}_{\boldsymbol{\omega}}$. In Fig.~\ref{fig:input_gradient}, we visualize the heatmap of input gradient to analyze how the target model and its proxy model react to $\boldsymbol{u}$ and $\boldsymbol{u}_{\mathtt{adv}}$ . Note that the three columns of subfigure (b), (c), (e) and (f) correspond to the input gradient w.r.t. Red, Green and Blue channels normalized to $[0,1]$.  

Indeed, the gradient w.r.t. different pixels of the input image essentially reflects how sensitive the model's prediction is to changes in each pixel value of this image~\cite{chan2020thinks}, and the output of robustly trained model will generate salient input gradients around the critical area of adversarial image which resembles the clean image. Generally speaking, $\mathcal{T}_{\boldsymbol{\theta}}$ obtained with early stopping has definitely stronger robustness than $\mathcal{P}_{\boldsymbol{\omega}}$. However, when faced with $\boldsymbol{u}_{\mathtt{adv}}$ generated with leaked gradient information of $\mathcal{T}_{\boldsymbol{\theta}}$, it can be observed from subfigure (b)$\rightarrow$(e) that the output of $\mathcal{T}_{\boldsymbol{\theta}}$ w.r.t.  $\boldsymbol{u}_{\mathtt{adv}}$ is seriously degraded and no longer produce salient input gradient around the horse in each channel. In comparison, $\mathcal{P}_{\boldsymbol{\omega}}$ is less sensitive to $\boldsymbol{\delta}_{K}$, and still has salient gradient around these pixels which matter most to the model decision in subfigure (c)$\rightarrow$(f). To summarize, $\mathcal{P}_{\boldsymbol{\omega}}$ exhibits stronger robustness against $\boldsymbol{\delta}_{K}$, which could serve as better initialization for defense compared with $\mathcal{T}_{\boldsymbol{\theta}}$ and leads to new defense mechanisms. 

\begin{figure*}[!t]
	%\vspace{-0.3cm}
	\begin{center}
			\includegraphics[height=4.5cm,width=15.9cm,trim=0 0 0 0,clip]{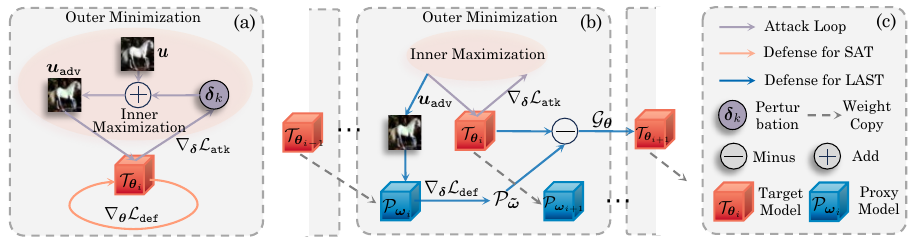}\\
	\end{center}
	\caption{Comparison of the attack and defense process between different paradigms. (a) SAT framework. (b) The LAST framework. (c) Description of the symbols. To avoid redundancy, the details of inner maximization process has been simplified in subfigure (b). 
	}\label{fig:pipeline}
\end{figure*}

\subsection{A Proxy Guided Adversarial Defense Framework}~\label{sec:method}

Based on the above analysis of $\mathcal{P}_{\boldsymbol{\omega}}$, we make the first attempt to introduce the proxy model to estimate better response to the generated attack w.r.t. the current state of target model. In detail, we first define the last state of target model as its proxy, i.e., $\boldsymbol{\omega}_{i}=\boldsymbol{\theta}_{i-1}, i=1,\cdots,\mathcal{M}$, where $\boldsymbol{\omega}_{0}$ is initialized using $\boldsymbol{\theta}_{0}$. In the following, we use $\mathcal{L}_{\mathtt{atk}}$ and $\mathcal{L}_{\mathtt{def}}$ to represent the attack and defense objective, respectively. Specifically, we adopt the same maximization process as SAT to generate $\boldsymbol{\delta}_{K}$ and devise a proxy guided update rule  for the defense strategy. At the first phase, we perform gradient descent with $\mathcal{P}_{\boldsymbol{\omega}}$ according to $\mathcal{L}_{\mathtt{def}}(\mathcal{P}_{\boldsymbol{\omega}}(\boldsymbol{u}_{i}+\boldsymbol{\delta}_{K}),\boldsymbol{v}_{i})$ to derive the fast wights $\boldsymbol{\tilde{\omega}}$ dynamically, which could be described as
\begin{equation}
	\begin{aligned}
		\boldsymbol{\tilde{\omega}}=\boldsymbol{\omega}_{i}-\boldsymbol{\beta}\cdot\nabla_{\boldsymbol{\omega}} \mathcal{L}_{\mathtt{def}}\bigl(\mathcal{P}_{\boldsymbol{\omega}_{i}}(\boldsymbol{u}_{i}+\boldsymbol{\delta}_{K}),~\boldsymbol{v}_{i}\bigr),\label{eq:omega_star}
	\end{aligned}
\end{equation}
where $\boldsymbol{\beta}$ denotes the learning rate of $\mathcal{P}_{\boldsymbol{\omega}}$. Note that Eq.~\eqref{eq:omega_star} is similar to the defense process of SAT in Eq.~\eqref{eq:defense_theta}, while $\boldsymbol{\delta}_{K}$ is tailored for $\mathcal{T}_{\boldsymbol{\theta}}$ instead of $\mathcal{P}_{\boldsymbol{\omega}}$. Then we employ the dynamically learned $\boldsymbol{\tilde{\omega}}$ and current state of target model (i.e., $\boldsymbol{\theta}_i$) to calculate the differential unit $\mathcal{G}_{\boldsymbol{\theta}}$, where $\mathcal{G}_{\boldsymbol{\theta}} = \boldsymbol{\theta}_{i} - \boldsymbol{\tilde{\omega}}$. For the second phase, we update $\boldsymbol{\omega}_{i}$ to record the current state of target model, and then perform gradient descent of $\boldsymbol{\theta}_i$ with the differential unit $\mathcal{G}_{\boldsymbol{\theta}}$. 

\begin{algorithm}[!t]
	\caption{The Proposed LAST  Framework }\label{alg:LAST}
	\begin{algorithmic}[1]
		%\REQUIRE Current UL variable $\x^t$ .
		%\ENSURE  $\y_K(\x^t,\z^t)$
		\REQUIRE Training epochs $\mathcal{J}$, $\mathcal{M}$ batches of data pairs ($\boldsymbol{u}_{i}$,~$\boldsymbol{v}_{i}$), attack iteration $K$, target model $\mathcal{T}_{\boldsymbol{\theta}}$,  and proxy model $\mathcal{P}_{\boldsymbol{\omega}}$.\ \ ($\boldsymbol{\alpha}$, $\boldsymbol{\beta}$, $\boldsymbol{\gamma}$ denotes attack step size,  the learning rate of $\mathcal{P}_{\boldsymbol{\omega}}$ and $\mathcal{T}_{\boldsymbol{\theta}}$.)
		%, attack step size $\alpha$, learning rate $\beta$ w.r.t. $p$ and $\gamma$ w.r.t. $\theta$.  
		\STATE \textit{// Initialize the proxy model $\mathcal{P}_{\boldsymbol{\omega}}$.}
		\STATE $\boldsymbol{\omega}_{0}=\boldsymbol{\theta}_{0}$.
		\FOR {$j=0 \rightarrow \mathcal{J}-1$}
		
		\FOR {$i=0 \rightarrow  \mathcal{M}-1$}
		\STATE Initialize $\mathbf{\delta}_{0}$.
		\STATE \textit{// Generate the perturbation with target model $\mathcal{T}_{\boldsymbol{\theta}}$.}
		\FOR {$k=0 \rightarrow K-1$}
		\STATE $\boldsymbol{\delta}_{k+1}=\boldsymbol{\delta}_{k}+\boldsymbol{\alpha} \cdot\mathtt{sgn}(\nabla_{\boldsymbol{\delta}}\mathcal{L}_{\mathtt{atk}}(\mathcal{T}_{\boldsymbol{\theta}_{i}}(\boldsymbol{u}_{i}+\boldsymbol{\delta}_{k}),\boldsymbol{v}_{i}))$. 
		\STATE $\boldsymbol{\delta}_{k+1}=\max \bigl[\min (\boldsymbol{\delta}_{k+1}, \boldsymbol{\epsilon}), -\boldsymbol{\epsilon} \bigr]$.
		\ENDFOR
		%		\STATE $g_{\theta}=0$
		%		\FOR {$k= 1\rightarrow  K$}
		\STATE \textit{// Phase 1: Estimate update direction of $\boldsymbol{\theta}_{i}$.}
		\STATE $\boldsymbol{\tilde{\omega}}=\boldsymbol{\omega}_{i}-\boldsymbol{\beta}\cdot\nabla_{\boldsymbol{\omega}} \mathcal{L}_{\mathtt{def}}\bigl(\mathcal{P}_{\boldsymbol{\omega}_{i}}(\boldsymbol{u}_{i}+\boldsymbol{\delta}_{K}),~\boldsymbol{v}_{i}\bigr)$. ~\label{alg:step_12}
		\STATE  $\mathcal{G}_{\boldsymbol{\theta}} = \boldsymbol{\theta}_{i} - \boldsymbol{\tilde{\omega}}$. \ \ (Compute the differential unit $\mathcal{G}_{\boldsymbol{\theta}}$)
		%		\ENDFOR
		\STATE \textit{// Phase 2: Update $\boldsymbol{\omega}_{i}$ and $\boldsymbol{\theta}_{i}$ sequentially.}
		\STATE $\boldsymbol{\omega}_{i+1}=\boldsymbol{\theta}_{i}$. 
		\STATE $\boldsymbol{\theta}_{i+1}=\boldsymbol{\theta}_{i} - \boldsymbol{\gamma}\cdot \mathcal{G}_{\boldsymbol{\theta}}$.~\label{alg:step_16}  \ \   
		\ENDFOR
		\ENDFOR
	\end{algorithmic}
\end{algorithm}

In Fig.~\ref{fig:pipeline}, we compare the defense processes of SAT with the proxy guided LAST framework. It is observed that the defense process of SAT, widely adopted by existing methods, updates the target model itself directly based on the generated adversarial examples, thereby weakening the defense capability due to the leakage of gradient information. In contrast, the defense process of the LAST framework is based on a proxy model without leakage of parameter states and gradient information, providing defensive prior knowledge and thereby enhancing robustness.
We also describe the whole algorithm of the LAST framework in Alg.~\ref{alg:LAST}. In the next subsection, we further introduce the self-distillation regularized defense objective to constrain the update of proxy model to estimate $\boldsymbol{\tilde{\omega}}$, which helps alleviate the catastrophic overfitting problem.

\subsection{Self Distillation Regularized Defense Objective}
Based on the introduced proxy model, which captures the historical states to introduce prior information for defense (Step~\ref{alg:step_12}-\ref{alg:step_16} in Alg.~\ref{alg:LAST}), we further delve into the defense objective to constrain the learning process of proxy model and alleviate the overfitting problem. Although $\mathcal{P}_{\boldsymbol{\omega}}$ is less sensitive to the adversarial attack targeted at $\boldsymbol{\theta}$, the perturbation still deteriorates the output of $\mathcal{P}_{\boldsymbol{\omega}}(\boldsymbol{u}_{\mathtt{adv}})$ leading to potential misclassification. Whereas, the direct output of target model, which refers to the soft targets in Knowledge Distillation (KD)~\cite{li2018exploring}, reflects which part the target model focuses on for prediction. 

When faced with the clean image and adversarial perturbation, the proxy model is supposed to generate outputs that have more similar distributions. Unlike these methods generating supervised soft targets with additional teacher models, we propose to constrain the estimation of $\boldsymbol{\tilde{\omega}}$ with the distance between soft targets of clean image and the corresponding adversarial image. Here we denote the temperature as $\boldsymbol{\tau}$, then the Self-Distillation (SD) defense objective could be written as follows	
\begin{equation}
	\begin{aligned}
\mathcal{L_{\mathtt{SD}}}= & \boldsymbol{\mu} \cdot \mathcal{L}_{\mathtt{KL}}\bigl(\mathcal{P}_{\boldsymbol{\omega}}(\boldsymbol{u}_{\mathtt{adv}}) / \boldsymbol{\tau}, \mathcal{P}_{\boldsymbol{\omega}}(\boldsymbol{u}) / \boldsymbol{\tau}\bigr)+  (1-\boldsymbol{\mu})\cdot  \\
&\mathcal{L}_{\mathtt{def}}\bigl(\mathcal{P}_{\boldsymbol{\omega}}(\boldsymbol{u}_{\mathtt{adv}}),\boldsymbol{v}\bigr) ,
	\end{aligned}
\end{equation}
where $\boldsymbol{\mu}\in[0,1)$ is the distillation coefficient to balance two loss terms, and $\mathcal{L}_{\mathtt{KL}}$ denotes the Kullback-Leibler (KL) divergence~\cite{kullback1951information} to measure the distance between two distributions of the soft targets. In this way, the proxy model is supposed to behave as consistently as possible when faced with clean or adversarial examples and generate correct and robust classification results meanwhile. The proposed SD loss, originated from the standard knowledge distillation loss with considerations for temperature and distillation coefficients, is restructured based on the introduced proxy model. Moreover, the defense objective supervises the learning process of proxy model without introducing (larger) pretrained teacher models or additional updates of models, thus can be flexibly integrated to the proposed algorithmic framework in Alg.~\ref{alg:LAST}. 

\subsection{Theoretical Basis and Discussion}\label{sec:discussion}	

\textbf{Theoretical basis.} We first provide the theoretical basis for this proxy guided update rule of LAST framework. 
\begin{lemma}\label{lm1}
	If $\mathcal{L}_\mathrm{def}$ is $L$-smooth, the updates to the proxy model $\tilde{\omega}$ are bounded, ensuring that the sequence $\{\theta_i\}$ exhibits stable convergence behavior.
\end{lemma}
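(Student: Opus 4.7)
The plan is to reduce the LAST update rule to a heavy-ball-style recursion on $\theta_i$ and then close a Lyapunov argument. First, I would combine the two phases of Alg.~\ref{alg:LAST} into a single recursion. Substituting $\tilde{\boldsymbol{\omega}}=\boldsymbol{\omega}_i-\boldsymbol{\beta}\nabla_{\boldsymbol{\omega}}\mathcal{L}_{\mathtt{def}}(\mathcal{P}_{\boldsymbol{\omega}_i}(\boldsymbol{u}_i+\boldsymbol{\delta}_K),\boldsymbol{v}_i)$ into $\boldsymbol{\theta}_{i+1}=\boldsymbol{\theta}_i-\boldsymbol{\gamma}(\boldsymbol{\theta}_i-\tilde{\boldsymbol{\omega}})$ and using $\boldsymbol{\omega}_i=\boldsymbol{\theta}_{i-1}$, I obtain
\begin{equation*}
\boldsymbol{\theta}_{i+1}=(1-\boldsymbol{\gamma})\boldsymbol{\theta}_i+\boldsymbol{\gamma}\boldsymbol{\theta}_{i-1}-\boldsymbol{\gamma}\boldsymbol{\beta}\,g_i,
\end{equation*}
where $g_i:=\nabla_{\boldsymbol{\omega}}\mathcal{L}_{\mathtt{def}}(\mathcal{P}_{\boldsymbol{\theta}_{i-1}}(\boldsymbol{u}_i+\boldsymbol{\delta}_K),\boldsymbol{v}_i)$. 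This reveals the proxy-guided step as a momentum-type update with effective step size $\boldsymbol{\gamma}\boldsymbol{\beta}$.

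Second, I would use $L$-smoothness to bound $\|g_i\|$. Under the mild standing assumption that $\mathcal{L}_{\mathtt{def}}$ is bounded below and that $\|g_0\|<\infty$, the Lipschitz inequality $\|\nabla\mathcal{L}_{\mathtt{def}}(\boldsymbol{\omega})-\nabla\mathcal{L}_{\mathtt{def}}(\boldsymbol{\omega}')\|\le L\|\boldsymbol{\omega}-\boldsymbol{\omega}'\|$, combined with an induction that keeps $\|\boldsymbol{\theta}_i-\boldsymbol{\theta}_{i-1}\|$ controlled through the small factor $\boldsymbol{\gamma}\boldsymbol{\beta}$, yields a uniform bound $\|g_i\|\le G$. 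This immediately gives $\|\tilde{\boldsymbol{\omega}}-\boldsymbol{\omega}_i\|\le\boldsymbol{\beta}G$, which is precisely the first half of the claim, namely that the proxy updates are bounded.

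Third, I would feed the bound into the recursion to prove the stable convergence of $\{\boldsymbol{\theta}_i\}$. Introducing the Lyapunov function $V_i=\mathcal{L}_{\mathtt{def}}(\boldsymbol{\theta}_i)+\tfrac{c}{2}\|\boldsymbol{\theta}_i-\boldsymbol{\theta}_{i-1}\|^2$ for a well-chosen $c>0$, and applying the descent lemma from $L$-smoothness to the one-step update of $\boldsymbol{\theta}$, I expect a bound of the form $V_{i+1}\le V_i-\eta\|\boldsymbol{\theta}_{i+1}-\boldsymbol{\theta}_i\|^2+\mathcal{O}(\boldsymbol{\gamma}^2\boldsymbol{\beta}^2 L G^2)$ whenever $\boldsymbol{\gamma}$ and $\boldsymbol{\beta}$ are small enough. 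Telescoping then shows $\sum_i\|\boldsymbol{\theta}_{i+1}-\boldsymbol{\theta}_i\|^2$ stays finite, so the iterates neither blow up nor oscillate indefinitely, matching the intended notion of stable convergence.

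The main obstacle I anticipate is calibrating the Lyapunov parameter $c$ against the coupling $(\boldsymbol{\gamma},\boldsymbol{\beta})$ so that the $\boldsymbol{\theta}_{i-1}$ contribution in the heavy-ball recursion does not erase the descent term. The subtlety is that $g_i$ is evaluated at the \emph{lagged} iterate $\boldsymbol{\theta}_{i-1}$ through $\mathcal{P}_{\boldsymbol{\omega}_i}$ rather than at $\boldsymbol{\theta}_i$, so the descent lemma applied at $\boldsymbol{\theta}_i$ incurs an extra drift term of order $L\boldsymbol{\gamma}\boldsymbol{\beta}\|\boldsymbol{\theta}_i-\boldsymbol{\theta}_{i-1}\|$ that must be absorbed into the momentum component of $V_i$. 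This is where $L$-smoothness does the heavy lifting, since it is exactly the Lipschitz constant that quantifies the mismatch between the lagged and current gradients.
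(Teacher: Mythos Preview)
Your proposal is correct in spirit and takes a genuinely different, more rigorous route than the paper. The paper's own proof is extremely brief: it invokes $L$-smoothness to write $\|\nabla_{\omega}\mathcal{L}_{\mathtt{def}}(\omega_i)-\nabla_{\omega}\mathcal{L}_{\mathtt{def}}(\omega_{i-1})\|\le L\|\omega_i-\omega_{i-1}\|$, deduces $\|\tilde{\omega}_{i+1}-\tilde{\omega}_i\|\le\beta L\|\omega_i-\omega_{i-1}\|$, and then simply asserts that for small $\gamma$ the identity $\|\theta_{i+1}-\theta_i\|=\gamma\|\tilde{\omega}-\theta_i\|$ ``ensures the stability'' of $\{\theta_i\}$. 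No Lyapunov function appears, no explicit step-size condition is derived, and the apparent circularity (bounding $\|\tilde{\omega}_{i+1}-\tilde{\omega}_i\|$ in terms of $\|\omega_i-\omega_{i-1}\|=\|\theta_{i-1}-\theta_{i-2}\|$, which itself needs to be bounded) is not closed.

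Your heavy-ball reduction $\theta_{i+1}=(1-\gamma)\theta_i+\gamma\theta_{i-1}-\gamma\beta g_i$ is exactly the right structural observation, and pairing it with a Lyapunov function $V_i=\mathcal{L}_{\mathtt{def}}(\theta_i)+\tfrac{c}{2}\|\theta_i-\theta_{i-1}\|^2$ is the standard way to turn such a recursion into a genuine descent argument. You also correctly flag the main technical wrinkle the paper glosses over: $g_i$ is evaluated at the lagged point $\theta_{i-1}$, so the descent lemma at $\theta_i$ picks up a drift of order $L\|\theta_i-\theta_{i-1}\|$ that must be absorbed by the momentum term. What the paper's approach buys is brevity at the level of a sketch; what your approach buys is an actual quantitative argument that would produce explicit constraints on $(\gamma,\beta,L)$ and a summable-increment conclusion $\sum_i\|\theta_{i+1}-\theta_i\|^2<\infty$, which is a concrete meaning of ``stable convergence behavior'' rather than an informal one.
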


\begin{proof}
	Assumed that $\mathcal{L}_\mathrm{def}$ is $L$-smooth with respect to $\omega$, then we have:
	\begin{equation}
		\|\nabla_\omega \mathcal{L}_{\mathrm{def}}(\omega_i)-\nabla_\omega \mathcal{L}_{\mathrm{def}}(\omega_{i-1})\|\leq L\|\omega_i-\omega_{i-1}\| \label{eq:lemma1}
	\end{equation}
	Using the update rule for $\tilde{\omega}$, the difference in consecutive updates is given by:
	$\|\tilde{\omega}_{i+1}-\tilde{\omega}_i\|\leq\beta L\|\omega_i-\omega_{i-1}\|$. This implies the updates to $\tilde{\omega}$ are bounded. Given $\gamma$ is small, we have 
		\begin{equation}
		\|\theta_{i+1}-\theta_i\|=\|-\gamma G_\theta\|=\gamma\|\tilde{\omega}-\theta_i\| \label{eq:lemma2}
	\end{equation}
	this ensures the stability of the update sequence $\left\{\theta_i\right\}.$
\end{proof}
\begin{theorem}\label{Thm1}
	Under the update rule given by LAST from Step 12-16 in Alg.~\ref{alg:LAST}, the sequence ot $\mathcal{T}_{\boldsymbol{\theta}}$ ,i.e., $\{\theta_i\}$ converges to a stable point, hence enhancing the model's robustness against adversarial attacks.
\end{theorem}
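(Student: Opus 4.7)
The plan is to build on Lemma~\ref{lm1} by constructing a Lyapunov descent argument for the coupled $(\boldsymbol{\theta}_i,\boldsymbol{\omega}_i)$ iterates. A first simplification is to eliminate the proxy variable: since $\boldsymbol{\omega}_{i+1}=\boldsymbol{\theta}_i$, Steps~\ref{alg:step_12}--\ref{alg:step_16} of Alg.~\ref{alg:LAST} collapse into the single recurrence
\begin{equation*}
\boldsymbol{\theta}_{i+1}=(1-\boldsymbol{\gamma})\boldsymbol{\theta}_i+\boldsymbol{\gamma}\boldsymbol{\theta}_{i-1}-\boldsymbol{\gamma}\boldsymbol{\beta}\,\nabla_{\boldsymbol{\omega}}\mathcal{L}_{\mathtt{def}}\bigl(\mathcal{P}_{\boldsymbol{\theta}_{i-1}}(\boldsymbol{u}_i+\boldsymbol{\delta}_K),\boldsymbol{v}_i\bigr),
\end{equation*}
which has the structure of a heavy-ball iteration with a one-step delayed gradient. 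Working in this single-variable form brings the convergence question into contact with standard momentum-style analyses.

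Next I would apply the $L$-smoothness hypothesis of Lemma~\ref{lm1} to write the descent inequality
\begin{equation*}
\mathcal{L}_{\mathtt{def}}(\boldsymbol{\theta}_{i+1})\le \mathcal{L}_{\mathtt{def}}(\boldsymbol{\theta}_i)+\langle\nabla\mathcal{L}_{\mathtt{def}}(\boldsymbol{\theta}_i),\boldsymbol{\theta}_{i+1}-\boldsymbol{\theta}_i\rangle+\tfrac{L}{2}\|\boldsymbol{\theta}_{i+1}-\boldsymbol{\theta}_i\|^2,
\end{equation*}
and use the step-bound~\eqref{eq:lemma2} together with the Lipschitz estimate~\eqref{eq:lemma1} to turn the cross term into $-\boldsymbol{\gamma}\boldsymbol{\beta}\|\nabla\mathcal{L}_{\mathtt{def}}(\boldsymbol{\theta}_{i-1})\|^2$ plus controllable errors proportional to $\|\boldsymbol{\theta}_i-\boldsymbol{\theta}_{i-1}\|^2$. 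Choosing $\boldsymbol{\gamma},\boldsymbol{\beta}$ small enough relative to $L$ so that a Lyapunov potential $V_i=\mathcal{L}_{\mathtt{def}}(\boldsymbol{\theta}_i)+c\,\|\boldsymbol{\theta}_i-\boldsymbol{\theta}_{i-1}\|^2$ is monotonically non-increasing, I would telescope over $i=0,\dots,N-1$ to conclude both $\sum_i\|\nabla\mathcal{L}_{\mathtt{def}}(\boldsymbol{\theta}_i)\|^2<\infty$ and $\|\boldsymbol{\theta}_{i+1}-\boldsymbol{\theta}_i\|\to 0$. A standard Cauchy-type argument then yields convergence of $\{\boldsymbol{\theta}_i\}$ to a stable limit $\boldsymbol{\theta}^{\star}$.

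The main obstacle will be the delayed-gradient coupling: the descent step uses $\nabla_{\boldsymbol{\omega}}\mathcal{L}_{\mathtt{def}}$ evaluated at the stale iterate $\boldsymbol{\omega}_i=\boldsymbol{\theta}_{i-1}$ rather than at $\boldsymbol{\theta}_i$, so a naive descent bound does not close without absorbing the mismatch into the Lyapunov function. Lemma~\ref{lm1} already supplies the key ingredient, namely that the increments $\|\boldsymbol{\theta}_{i+1}-\boldsymbol{\theta}_i\|$ are controlled by $\|\boldsymbol{\omega}_i-\boldsymbol{\omega}_{i-1}\|$, so the correction term $c\,\|\boldsymbol{\theta}_i-\boldsymbol{\theta}_{i-1}\|^2$ can be justified by calibrating $c$ against $L$, $\boldsymbol{\beta}$, and $\boldsymbol{\gamma}$. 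Finally, the robustness conclusion follows from stability of the limit: once $\boldsymbol{\theta}_i\to\boldsymbol{\theta}^{\star}$, the relation $\boldsymbol{\omega}_{i+1}=\boldsymbol{\theta}_i$ forces $\boldsymbol{\tilde{\omega}}\to\boldsymbol{\theta}^{\star}$ as well, and since the admissible perturbations $\boldsymbol{\delta}_K$ remain within $\mathcal{S}$ throughout, the stationary point $\boldsymbol{\theta}^{\star}$ inherits the defensive prior encoded by the proxy-guided direction, closing the statement of the theorem.
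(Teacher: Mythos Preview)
Your approach differs substantially from the paper's. The paper's own proof is a two-sentence Cauchy argument: it simply asserts that the boundedness from Lemma~\ref{lm1}, together with ``the diminishing nature of $\mathcal{G}_{\boldsymbol{\theta}}$ as $\boldsymbol{\theta}_i$ approaches $\boldsymbol{\tilde{\omega}}$,'' forces $\{\boldsymbol{\theta}_i\}$ to be Cauchy, and stops there. Your Lyapunov-descent plan---rewriting the recursion as a heavy-ball iteration with a delayed gradient, building the potential $V_i=\mathcal{L}_{\mathtt{def}}(\boldsymbol{\theta}_i)+c\,\|\boldsymbol{\theta}_i-\boldsymbol{\theta}_{i-1}\|^2$, and telescoping---is considerably more substantive and is the standard route for momentum-type schemes; under appropriate step-size restrictions it would genuinely deliver $\|\boldsymbol{\theta}_{i+1}-\boldsymbol{\theta}_i\|\to 0$ and $\sum_i\|\nabla\mathcal{L}_{\mathtt{def}}(\boldsymbol{\theta}_i)\|^2<\infty$.

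That said, your final step shares exactly the paper's weakness. Vanishing increments $\|\boldsymbol{\theta}_{i+1}-\boldsymbol{\theta}_i\|\to 0$ do \emph{not} by themselves make $\{\boldsymbol{\theta}_i\}$ Cauchy (partial sums of the harmonic series are the classical counterexample), and there is no ``standard Cauchy-type argument'' that closes this without an extra structural hypothesis such as a Kurdyka--\L{}ojasiewicz inequality, coercivity with isolated critical points, or convexity. Both proofs are therefore incomplete at precisely the same point; yours simply makes explicit the work the paper elides. One further issue worth flagging in your write-up: $\mathcal{L}_{\mathtt{def}}$ depends on the minibatch $(\boldsymbol{u}_i,\boldsymbol{v}_i)$ and on $\boldsymbol{\delta}_K$, which itself is generated from $\boldsymbol{\theta}_i$, so treating $\mathcal{L}_{\mathtt{def}}$ as a single fixed $L$-smooth function of the weights is an idealization that neither argument confronts.
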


\begin{proof}
	To demonstrate convergence, we consider the sequence $\{\theta_i\}$ and show that it satisfies the conditions for a Cauchy sequence under the proposed update rule. Specifically, for any $\epsilon>0$, there exists an $N$ such that for all $m,n>N,\|\theta_m-\theta_n\|<\epsilon.$ Given the bounded updates and the diminishing nature of $G_{\theta}$ as $\theta_{i}$ approaches $\tilde{\omega}$, the sequence $\{\theta_i\}$ converges.
\end{proof}

\textbf{Further discussion.} Here we provide more discussion and different perspectives to analyze the effectiveness of the proxy guided  update rule. It can be observed from Eq.~\eqref{eq:lemma2} that the historical sequences of $\boldsymbol{\theta}_{i}$ is always constrained by the estimation of distance between $\boldsymbol{\tilde{\omega}}$ and $\boldsymbol{\theta}_{i}$, both of which is derived from $\boldsymbol{\theta}_{i-1}$. Therefore, update rule improves the consistency between adjacent states of $\boldsymbol{\theta}_{i}$, i.e., $\{\cdots,\boldsymbol{\theta}_{i}-\boldsymbol{\theta}_{i-1},\boldsymbol{\theta}_{i+1}-\boldsymbol{\theta}_{i},\cdots\}$. 

Besides, we could describe the update format as $\boldsymbol{\theta}_{i+1}=(1-\boldsymbol{\gamma})\cdot \boldsymbol{\theta}_{i}+\boldsymbol{\gamma}\cdot \boldsymbol{\tilde{\omega}}$. On top of that, $\boldsymbol{\gamma}$ serves as the aggregation coefficient to balance the influence of responses to historical attacks and current attacks. From this perspective, the format of this update rule is similar to these momentum based optimizers~\cite{sutskever2013importance} to some extent, which refer to the accumulation of historical gradients to perform gradient descent. Besides, we could also find evidence to demonstrate the effectiveness of proxy model from other general techniques such as SWA technique~\cite{izmailov2018averaging}, which was applied to smooth the weights by averaging multiple checkpoints along the training process. Whereas, SWA simply accumulates the exponential weighted average of the historical weights and target model's response. To summarize, the new update rule combines the response of proxy model, which bridges the historical states and current states to improve consistency among the optimization trajectories and introduce extra prior for the defense process.

\section{Experiments}~\label{sec:experiments}
In this section, we demonstrate the effectiveness of LAST framework by improving  various single-step and multi-step AT methods including SOTA methods under different settings. Besides, we also conduct the ablation experiments and validates its generalization performance under transfer-based black box attacks and out-of-distribution samples. All the experiments are conducted on a PC with 32GB RAM and 24GB NVIDIA RTX-4090 GPU.

\subsection{Implementation Details}

        \begin{table*}[!htbp]
	\centering
	%	\vspace{-0.3cm}
	\caption{We report the SA and RA of Fast-AT, Fast-AT-GA and Fast-BAT under PGD attack (PGD-10 and PGD-50) and AutoAttack. We use $m${\footnotesize$\pm n$} to denote the mean SA (i.e., $m$) with standard deviation (i.e., {\footnotesize$n$}) by running all the algorithms with 3 random seeds. Besides, we also present the runtime for three algorithms, both before and after improvement, to evaluate the computational cost of our defense framework.}
	\label{tab:AT_results1}
	\renewcommand\arraystretch{1.4}
	\setlength{\tabcolsep}{1.2mm}{
		\begin{tabular}{c|c|c|c|c|c|c|c|c}
			\toprule \hline \multicolumn{9}{c}{ CIFAR-10 dataset, PARN-18 backbone trained with $\boldsymbol{\epsilon}=8 / 255$} \\
			\hline \multirow{2}{*}{ Method } &\multirow{2}{*}{ SA (\%) } &  \multicolumn{2}{c|}{ PGD-10 (\%) }& \multicolumn{2}{c|}{ PGD-50 (\%) }  &\multicolumn{2}{c|}{ AutoAttack (\%) } & Time \\
			&&  \multicolumn{1}{c|}{$\boldsymbol{\epsilon}=8 / 255$} &  \multicolumn{1}{c|}{$\boldsymbol{\epsilon}=16 / 255$}&   \multicolumn{1}{c|}{$\boldsymbol{\epsilon}=8 / 255$ }&  $\boldsymbol{\epsilon}=16 / 255$ &\multicolumn{1}{c|}{$\boldsymbol{\epsilon}=8 / 255$} &  \multicolumn{1}{c|}{$\boldsymbol{\epsilon}=16 / 255$}&  (Sec/ Iteration)\\
			\cmidrule{1-9} 
			
			Fast-AT & $\mathbf{83.56} $\footnotesize{$\pm0.06$ } & $47.03$\footnotesize{$\pm0.29$}& $13.79$\footnotesize{$\pm 0.15$}& $44.94$\footnotesize{$\pm0.52$} &$8.85$\footnotesize{$\pm0.20$} & $ 41.80$\footnotesize{$\pm0.68$ }& $7.32$\footnotesize{$\pm0.27 $}& $5.543\times10^{-2}$ \\
			
			\rowcolor{gray!15} LF-AT (Ours) & $81.70$\footnotesize{$\pm0.15$} &  $\mathbf{47.17}$\footnotesize{$ \pm0.15$} & $\mathbf{14.48}$\footnotesize{$ \pm0.23$}&$\mathbf{45.50} $\footnotesize{$\pm0.04  $}& $\mathbf{9.89} $\footnotesize{$\pm0.14$} & $\mathbf{42.11} $\footnotesize{$\pm0.19$} & $\mathbf{8.13} $\footnotesize{$\pm0.20$}&$5.719\times10^{-2}$ \\
			\cmidrule{1-9}
			
			Fast-AT-GA & $\mathbf{81.00}$\footnotesize{$\pm0.59$} & $48.30$\footnotesize{$\pm0.13 $} &$16.36$\footnotesize{$\pm0.14$}& $ 46.63$\footnotesize{$\pm0.33 $} &$11.12$\footnotesize{$\pm0.12$} & $43.17 $\footnotesize{$\pm0.21$ }& $9.04$\footnotesize{$\pm0.18$} & $1.632\times10^{-1}$ \\
			
			\rowcolor{gray!15} LF-AT-GA (Ours) & $79.18$\footnotesize{$\pm0.13 $} & $\mathbf{48.60} $\footnotesize{$\pm0.06 $}& $\mathbf{17.52} $\footnotesize{$\pm0.02$}& $\mathbf{47.25} $\footnotesize{$\pm0.09 $} &$\mathbf{12.63} $\footnotesize{$\pm0.17$} & $\mathbf{43.31} $\footnotesize{$\pm0.23$} & $\mathbf{10.22} $\footnotesize{$\pm0.05$} &$1.643\times10^{-1}$ \\
			\cmidrule{1-9}
			
			Fast-BAT & $\mathbf{82.01}$\footnotesize{$\pm0.04 $} & $50.42$\footnotesize{$\pm0.36 $ }&$18.29$\footnotesize{$\pm 0.18$}& $ 49.07$\footnotesize{$ \pm0.39$} &$13.31$\footnotesize{$\pm0.16$} & $45.51$\footnotesize{$\pm 0.44 $ }& $10.98$\footnotesize{$\pm 0.19$}&$1.644\times10^{-1}$ \\
			
			\rowcolor{gray!15} LF-BAT (Ours) & $79.72$\footnotesize{$\pm 0.14$} & $\mathbf{50.65}$\footnotesize{$\pm0.19$ } & $\mathbf{19.73}$\footnotesize{$\pm0.05 $ } & $\mathbf{49.66}$\footnotesize{$\pm0.20$} & $\mathbf{15.25}$\footnotesize{$\pm0.20$} & $\mathbf{45.54} $\footnotesize{$\pm0.27$}& $\mathbf{12.23}$\footnotesize{$\pm0.27$} & $1.656\times10^{-1}$ \\
			\hline \bottomrule
			
		\end{tabular}
	}	
\end{table*}

\begin{table}[htbp]
	\centering
	\caption{Illustrating the SA and RA results of Fast-AT, Fast-AT-GA, Fast-BAT and improved version under LAST (denoted as LF-AT, LF-AT-GA and LF-BAT) under PGD attack and AutoAttack on CIFAR100 dataset. We annotate the performance gain when defending against different attacks with subscripts.}
	\label{tab:AT_results2}
	%	\vspace{0.4cm}
	\renewcommand\arraystretch{1.2}
	\setlength{\tabcolsep}{1.2mm}{
		\begin{tabular}{c|c|c|c}
			\toprule \hline 
			\multicolumn{4}{c}{ CIFAR-100 dataset, PARN-18 backbone trained with $\boldsymbol{\epsilon}=8 / 255$} \\
			\hline \multirow{2}{*}{ Method } &\multirow{2}{*}{ SA (\%) }&  \multicolumn{2}{c}{ PGD-10 (\%) }\\
			& & \multicolumn{1}{c|}{$\boldsymbol{\epsilon}=8 / 255$} &  \multicolumn{1}{c}{$\boldsymbol{\epsilon}=16 / 255$} \\
			\cmidrule{1-4} 
			Fast-AT  & $\mathbf{55.08} $& $24.33$& $7.43$  \\
			\rowcolor{gray!15} LFast-AT (Ours)&  $50.81 $ & $\mathbf{25.19}_{\uparrow0.86} $&$\mathbf{9.00}_{\uparrow1.57}  $\\
			\cmidrule{1-4} 
			Fast-AT-GA & $\mathbf{53.25} $ &$25.66$& $ 8.60$ \\
			\rowcolor{gray!15}LFast-AT-GA  (Ours) & $48.22$& $\mathbf{25.88}_{\uparrow0.23}$& $\mathbf{10.27}_{\uparrow1.67}  $ \\
			\cmidrule{1-4} 
			Fast-BAT & $\mathbf{42.79}$ &$22.60$& $8.92$ \\
			\rowcolor{gray!15}LFast-BAT  (Ours)&$42.46$ &$\mathbf{23.15}_{\uparrow0.55}$ & $\mathbf{9.84}_{\uparrow0.92}$ \\
			\hline \bottomrule
			\hline \multirow{2}{*}{ Method }& \multicolumn{2}{c|}{ PGD-50 (\%) }  &\multicolumn{1}{c}{ AutoAttack (\%) } \\
			&   \multicolumn{1}{c|}{$\boldsymbol{\epsilon}=8 / 255$} &  \multicolumn{1}{c|}{$\boldsymbol{\epsilon}=16 / 255$}&   \multicolumn{1}{c}{$\boldsymbol{\epsilon}=16/ 255$ } \\
			\cmidrule{1-4} 
			Fast-AT  &$23.53$& $5.52$ &$4.15$ \\
			\rowcolor{gray!15} LFast-AT (Ours)& $\mathbf{24.37}_{\uparrow0.84} $ & $\mathbf{7.49}_{\uparrow1.98} $&$\mathbf{5.44}_{\uparrow1.29}$\\
			\cmidrule{1-4} 
			Fast-AT-GA &$24.85$ & $6.83$&$5.32$\\
			\rowcolor{gray!15}LFast-AT-GA  (Ours) &$\mathbf{25.43}_{\uparrow0.58}$& $\mathbf{8.81}_{\uparrow1.97}$&$\mathbf{6.27}_{\uparrow0.95}$\\
			\cmidrule{1-4} 
			Fast-BAT &$22.05$ & $7.81$& $5.80$\\
			\rowcolor{gray!15}LFast-BAT  (Ours)& $\mathbf{22.78}_{\uparrow0.72}$& $\mathbf{8.74}_{\uparrow0.93}$& $\mathbf{6.10}_{\uparrow0.30}$ \\
			\hline \bottomrule
		\end{tabular}
	}
\end{table}

\subsubsection{Datasets and models} 

In this paper, we conduct our experiments based on CIFAR10 dataset~\cite{krizhevsky2009learning}, CIFAR100 dataset~\cite{krizhevsky2009learning} and TinyImageNet~\cite{deng2009imagenet} datasets. CIFAR10 and CIFAR100 datasets are foundational benchmarks widely utilized for evaluating robustness of AT trained models. CIFAR-10 consists of 60,000 32x32 color images in 10 classes, while CIFAR-100 maintains the same image dimensions and total number of images as CIFAR-10 and offers a more granular classification challenge with 100 classes. The TinyImageNet dataset extends the challenge by offering 100,000 images across 200 classes, thereby broadening the scope for deeper and more nuanced evaluation. As for the network structures, we mainly use the PreActResNet (PARN)-18~\cite{he2016identity} as our backbone, which is commonly implemented by existing AT methods. Furthermore, we also extend to the larger WideResNet (WRN)-34-10~\cite{zagoruyko2016wide} backbone to demonstrate the generalization performance of LAST framework with large-scale network structure. As for the adversarial attacks to evaluate the robustness, we adopt PGD and AutoAttack~\cite{croce2020reliable} with different attack iterations, perturbation sizes. Specifically, we use PGD-10 and PGD-50 to represent 10-step PGD attack with 1 restart step and 50-step PGD attack with 10 restart steps, respectively.  

\subsubsection{Training Baselines and Metrics} 

We demonstrate LAST's potential to consistently improve various popular SAT methods. For the single-step AT methods, we choose Fast-AT~\cite{wong2020fast}, Fast-AT-GA~\cite{andriushchenko2020understanding} and Fast-BAT~\cite{zhang2022revisiting} as representative SAT methods. Fast-AT-GA refines the Fast-AT through strategies that ensure better gradient quality named GradAligh (GA), while Fast-BAT accesses the second-order gradient based on BLO formulation, which represents the SOTA method for comparison. As for the multi-step AT methods, PGD based AT have been widely explored and used for improving robustness with higher computational cost. Furthermore, we also implement LAST framework on TinyImageNet dataset and compared its performance with PGD-AT, TRADES~\cite{zhang2019theoretically} and DyART~\cite{xu2023exploring}, which is the selected SOTA method from RobustBench~\cite{croce2021robustbench}.  Note that these methods for comparison are also implemented with SWA techniques to show the effectiveness of proxy guided update rule. As for the evaluation metrics, we adopt the test Robust Accuracy (RA) as the metric of robustness evaluation. It's important to note that, unless otherwise specified, the LAST framework is implemented without the SD objective when reporting performance. This ensures that the comparisons are made under the same defensive objectives and common hyperparameters, allowing for a fair comparison of the methods' effectiveness in adversarial robustness. In particular, we also analyze the convergence behavior of test robust loss and RA to evaluate the robustness performance and stability of training process. Besides, we report the average performance of the best model trained with early stopping under three random seeds for fair comparison. As for the  calculation of average runtime, we report the runtime each iteration within the firs epoch to compare the computation efficiency of original AT methods and our improved versions. 

\subsubsection{Training and Hyperpramerters} 

Basically, we follow~\cite{zhang2022revisiting,xu2023exploring} to set most common hyperparameters such as the $\boldsymbol{\alpha}$ and $\boldsymbol{\beta}$ for different adversarial attacks. All the methods adopt the SGD optimizer with momentum set as 0.9 and weight decay set as $5.e^{-4}$. For PARN-18 backbone, we train Fast-AT, Fast-AT-GA, Fast-BAT and PGD-AT(-2) (i.e., 2-step PGD-AT) for 30 epochs and use cyclic scheduler with maximum $\boldsymbol{\beta}=0.2$, and train PGD-AT(-10) for 200 epochs using multi-step scheduler with $\boldsymbol{\beta}=0.1$. For WRN-34-10 backbone, we train Fast-AT, Fast-AT-GA and Fast-BAT for 50 epochs using the same learning rate schedular with $\boldsymbol{\beta}=0.1$. For the special hyperparameter of LAST framework, we set the aggregation coefficient $\boldsymbol{\gamma}=0.8$.  Note that the introduced SD objective is only applied for Fast-AT to verify its effectiveness to alleviate the catastrophic overfitting problem under larger perturbation $\boldsymbol{\epsilon}=16/255$. As for the distillation coefficient and temperature for the self distillation defense objective, we follow ~\cite{liexploring} to set the common distillation hyperparameter for PARN as $\boldsymbol{\mu}=0.95$ and $\boldsymbol{\tau}=6.0$. When compared with SOTA methods, we replicate TRADES and DyART according to the implementation of DyART, which first trains the model with clean data with 42 epochs, and then train the model with 100 epochs by loading the checkpoint at 20 epoch. 

%~\footnote{\url{https://github.com/haitongli/knowledge-distillation-pytorch}.}~\footnote{\url{https://github.com/OPTML-Group/Fast-BAT}.}

\subsection{Evaluation with Single-Step AT methods}
We first demonstrate the improvement of LAST framework on the training stability and robustness with various single-step AT methods including Fast-AT, Fast-AT-GA and Fast-BAT.

 \begin{figure*}[!t] 
	\begin{center}
		\begin{tabular}{@{\extracolsep{-2.3em}}c@{\extracolsep{-0.2em}}c@{\extracolsep{-0.0em}}c@{\extracolsep{-0.3em}}c@{\extracolsep{-0.2em}}}
			\multicolumn{2}{c}{\includegraphics[height=3.97cm,width=7.1cm,trim=20 0 35 0,clip]{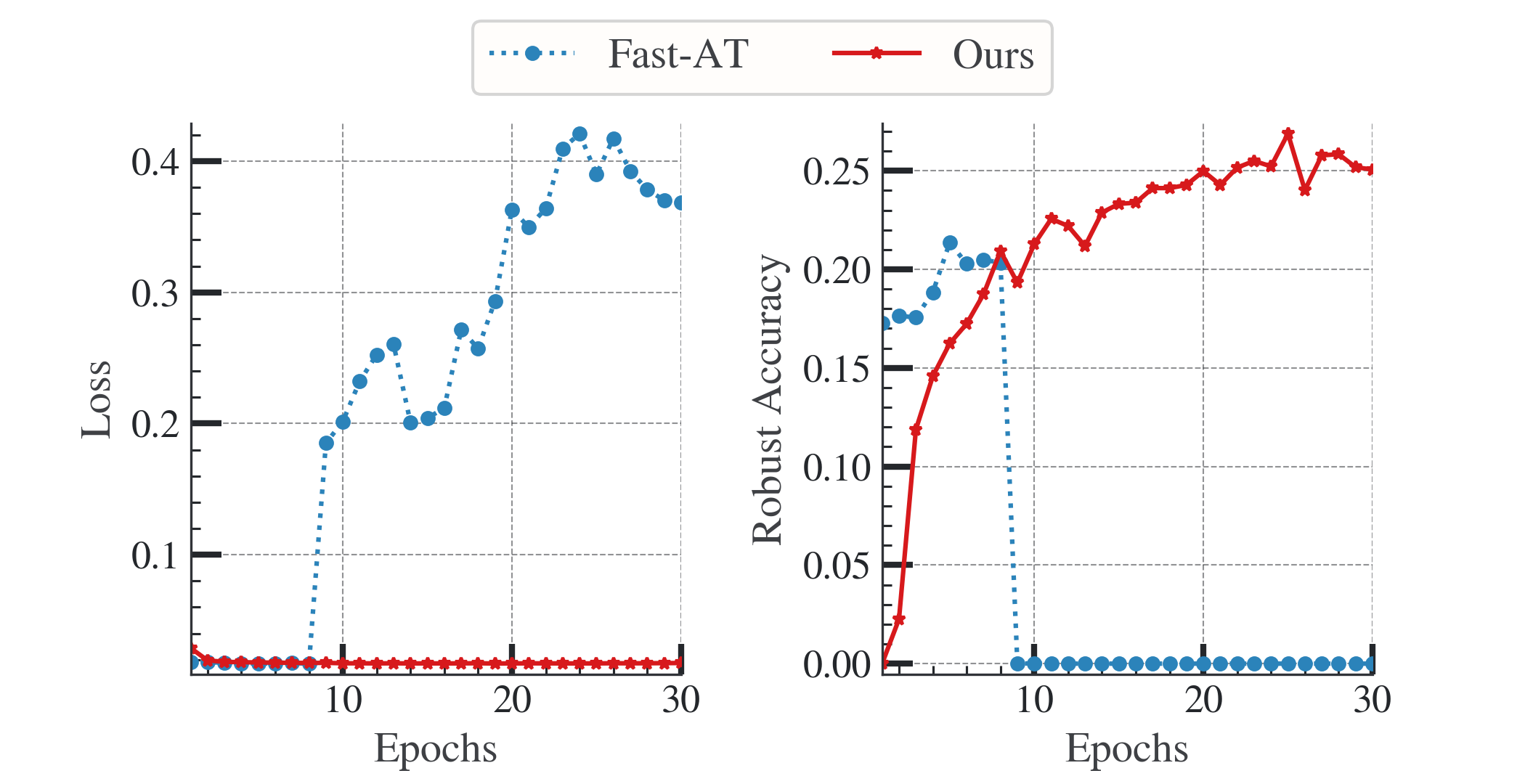} }&\multicolumn{2}{c}{\includegraphics[height=3.67cm,width=9.8cm,trim=20 0 0 30,clip]{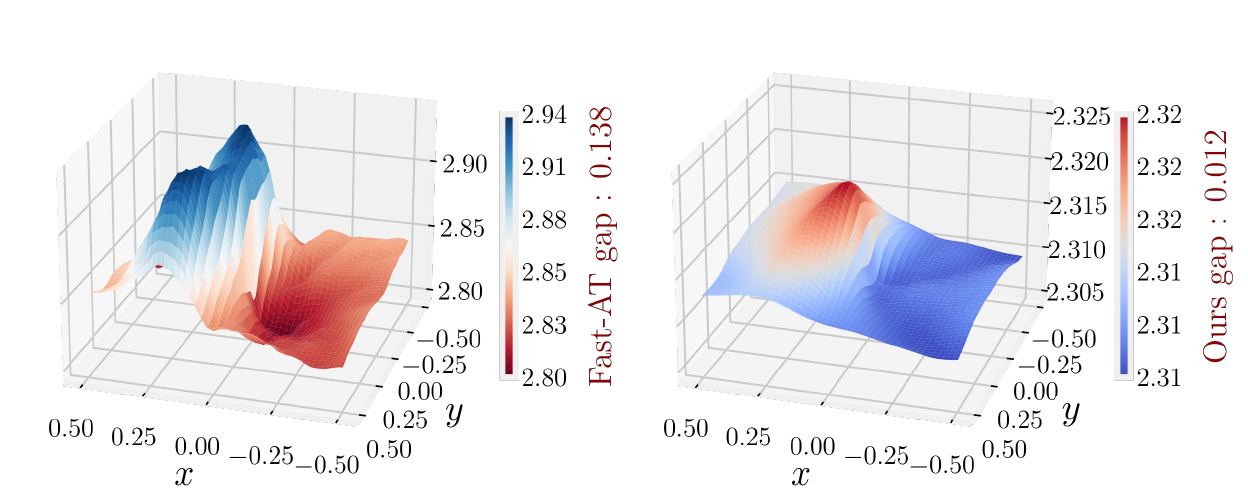} }\\
			\multicolumn{2}{c}{ (a) Robust Loss and Accuracy}& \multicolumn{2}{c}{ (b) Adversarial Loss Landscape} \\
		\end{tabular}
	\end{center}
	\caption{Subfigure (a) compare the convergence behavior of test loss and RA for Fast-AT and ours on CIFAR10 dataset under PGD-10 attack with $\boldsymbol{\epsilon}=16/255$. In Subfigure (b), we compare the loss landscape for Fast-AT and our version. The gap of maxi- and minimum losses is calculated within the range of $x, y\in[-0.5,0.5]$.}\label{fig:fast_at}
\end{figure*}

\textbf{Robust Accuracy on the CIFAR10 dataset.} We first implement the LAST framework without SD defense objective based on Fast-AT, Fast-AT-GA, Fast-BAT and evaluate the Standard Accuracy (SA) and RA on the CIFAR10 using PARN-18 backbone in Tab.~\ref{tab:AT_results1}.  Note that all the results are calculated by running the training process with three random seeds, with the evidence that the performance of Fast-BAT will make a significant difference using its default seed and other random seeds during training. We use $m${\footnotesize$\pm n$} to represent the mean SA (i.e., $m$) with standard deviation (i.e., {\footnotesize$n$}). As it is shown, when evaluating the robustness using the same attack strength($\boldsymbol{\epsilon}=8 / 255$), our improved version of three SAT methods consistently improves the Robust Accuracy (RA) of Fast-AT, FAST-AT-GA and Fast-BAT. More importantly, the RA of our LAST framework has a significantly lower standard deviation,  indicating that the random seed has less impact on the training performance. 

Furthermore, when tested against stronger attacks ($\boldsymbol{\epsilon}=16 / 255$), our method shows significantly better improvements in RA (and in most cases lower standard deviation), which indicates that the LAST framework effectively improves the model's robustness to adversarial examples introduced with unknown adversarial perturbations, rather than just being robust to the magnitude of the trained adversarial perturbations. This conclusion can also be seen by the analysis of the adversarial loss landscape in Fig.~\ref{fig:loss_landscape}, where our improved method has significantly lower loss on average as well as smoother surfaces.

\textbf{Runtime analysis.} Furthermore, we evaluate the average runtime of SAT methods and our improve ones for each iteration to analyze the extra computation cost brought by the proxy guided update rule. In Algorithm 1, we can see that the extra calculation caused by the introduced proxy model is mainly the calculation of the differential unit $\mathcal{G}_{\boldsymbol{\theta}}$, and the gradient calculation burden about the adversarial example is similar to that of SAT. As it is shown in the table, improving existing AT methods with our framework only slightly increases the runtime (up to $2\times10^{-3}$ Sec/ Iteration, 3.17\% increase), which demonstrates its potential to serve as an alternative of SAT with compromised computation cost.

 \begin{table}[!t]
 	\centering
 	\caption{We report the SA and RA of Fast-AT, Fast-AT-GA, Fast-BAT and the LAST framework under PGD attack (PGD-10 and PGD-50) and AutoAttack on CIFAR10 dataset using WRN-34-10 structure as the backbone. }
 	\label{tab:AT_results6}
 	\renewcommand\arraystretch{1.4}
 	\setlength{\tabcolsep}{0.4mm}{
 		\begin{tabular}{c|c|c|c|c|c|c}
 			\toprule \hline \multicolumn{7}{c}{ CIFAR-10 dataset, WRN-34-10 backbone trained with $\boldsymbol{\epsilon}=8 / 255$}\\
 			\hline \multirow{2}{*}{ Method } &\multirow{2}{*}{ SA (\%) }&  \multicolumn{2}{c|}{ PGD-10 (\%) }& \multicolumn{2}{c|}{ PGD-50 (\%) }  &\multicolumn{1}{c}{ AutoAttack (\%) } \\
 			& & \multicolumn{1}{c|}{$8 / 255$} &  \multicolumn{1}{c|}{$16 / 255$}&  \multicolumn{1}{c|}{$8 / 255$} &  \multicolumn{1}{c|}{$16 / 255$}&     \multicolumn{1}{c}{$16/ 255$ } \\	
 			\cmidrule{1-7} 
 			Fast-AT  & $\mathbf{80.00}$& $45.89$& $17.49$ &$43.65$& $10.92$&$7.80$\\
 			%			\rowcolor{gray!15} LF-AT &  $50.817 $\footnotesize{$\pm0.33 $} & $\mathbf{25.190}$\footnotesize{$\pm0.07 $}&$\mathbf{9.003}$\footnotesize{$\pm0.07 $}& $\mathbf{24.373}$\footnotesize{$\pm0.42 $} & $\mathbf{7.497}$\footnotesize{$\pm0.09 $}&$\mathbf{5.443}$\footnotesize{$\pm0.13$}\\
 			%			\cmidrule{1-7}
 			Fast-AT-GA & $78.72$ &$46.82$& $ 18.01$ &$45.12$& $12.31$&$9.82$\\
 			%			\rowcolor{gray!15}LF-AT-GA  & $48.220 $\footnotesize{$\pm0.32 $}& $\mathbf{25.887}$\footnotesize{$\pm0.14 $}& $\mathbf{10.277}$\footnotesize{$\pm0.10 $} &$\mathbf{25.433}$\footnotesize{$\pm0.23 $}& $\mathbf{8.813}$\footnotesize{$\pm0.14 $}&$\mathbf{6.270}$\footnotesize{$\pm0.12 $}\\
 			%			\cmidrule{1-7}
 			Fast-BAT & $79.93$ &$47.87$& $17.55$ &$46.45$& $12.41$& $10.09$\\
 			\cmidrule{1-7}
 			\rowcolor{gray!15}LAST (Ours) &$77.88$&$\mathbf{49.02}$ & $\mathbf{19.23}$&$\mathbf{47.94}$& $\mathbf{14.15}$&$\mathbf{11.87}$ \\
 			\hline \bottomrule
 		\end{tabular}
 	}
 \end{table}
 
\textbf{Evaluation on the CIFAR100 dataset.} In Tab.~\ref{tab:AT_results2}, we compare the performance of these methods with our LAST framework on CIFAR100 dataset. Note that we annotate the performance gain for different methods when defending against different attacks with subscripts. It can be clearly observed that the LAST framework shows consistent performance improvement of RA on PGD-10, PGD-50 and AutoAttack. Besides, we can find that our improved versions of Fast-AT, Fast-AT-GA and Fast-BAT have more significant performance gains of average performance on the CIFAR 100 dataset with more categories compared with CIFAR10 dataset.

\textbf{Evaluation on the WRN-34-10 backbone.} Besides, we also demonstrate the effectiveness of LAST framework with different backbones by implementing these methods and our LAST framework based on the larger WRN-34-10 backbone on CIFAR10 dataset. As it is shown  in Tab.~\ref{tab:AT_results6}, the LAST framework also achieves significant robustness improvement compared to Fas-AT, Fast-AT-GA and Fast-BAT against different attack types and perturbation sizes. In addition, we also noticed that the significant improvement in model robustness is accompanied by a decrease in clean accuracy. This trade-off has been widely discussed as another common issue in SAT, and many techniques have been investigated specifically aiming at improving clean accuracy without performance loss such as using the CAM technique~\cite{jiang2021layercam} or class-agnostic methods~\cite{ribeiro2016should}, which are not only applicable to SAT but also can be extended to our proposed LAST framework.
	\begin{table*}[!t]
	\centering
	\caption{We report the SA and RA of PGD-10 and our improved version (LPGD-10) under PGD attack (PGD-10 and PGD-50) and AutoAttack on CIFAR10 and CIFAR100 dataset. We use $m${\footnotesize$\pm n$} to denote the mean SA (i.e., $m$) with standard deviation (i.e., {\footnotesize$n$}).}
	
	\label{tab:AT_results3}
	\renewcommand\arraystretch{1.4}
	\setlength{\tabcolsep}{1.2mm}{
		\begin{tabular}{c|c|c|c|c|c|c|c}
			\toprule \hline \multicolumn{8}{c}{ CIFAR-10 dataset, PARN-18 backbone trained with $\boldsymbol{\epsilon}=8 / 255$} \\
			\hline \multirow{2}{*}{ Method } &\multirow{2}{*}{ SA (\%) }&  \multicolumn{2}{c|}{ PGD-10 (\%) }& \multicolumn{2}{c|}{ PGD-50 (\%) }  &\multicolumn{2}{c}{ AutoAttack (\%) } \\
			& & \multicolumn{1}{c|}{$\boldsymbol{\epsilon}=8 / 255$} &  \multicolumn{1}{c|}{$\boldsymbol{\epsilon}=16 / 255$}&  \multicolumn{1}{c|}{$\boldsymbol{\epsilon}=8 / 255$} &  \multicolumn{1}{c|}{$\boldsymbol{\epsilon}=16 / 255$}&  \multicolumn{1}{c|}{$\boldsymbol{\epsilon}=8 / 255$} &  \multicolumn{1}{c}{$\boldsymbol{\epsilon}=16/ 255$ } \\
			\cmidrule{1-8} 
			PGD-AT & $81.948 $\footnotesize{$\pm0.74$}& $51.923$\footnotesize{$\pm0.30 $}& $20.310$\footnotesize{$\pm0.75 $}&$50.757$\footnotesize{$\pm0.33$}& $15.677$\footnotesize{$\pm0.41 $} &$47.087$\footnotesize{$\pm0.54 $}&$13.093$\footnotesize{$\pm0.43 $} \\
			\rowcolor{gray!15} LPGD-AT (Ours) &  $\mathbf{82.187}$\footnotesize{$\pm0.90$} & $\mathbf{53.230}$\footnotesize{$\pm0.20$}&$\mathbf{22.203}$\footnotesize{$\pm0.37$}& $\mathbf{52.137}$\footnotesize{$\pm0.10$} & $\mathbf{17.587}$\footnotesize{$\pm0.57$} &$\mathbf{47.707}$\footnotesize{$\pm0.22 $}&$\mathbf{14.297}$\footnotesize{$\pm0.03$}\\
			
			\hline \multicolumn{8}{c}{ CIFAR-100 dataset, PARN-18 backbone trained with $\boldsymbol{\epsilon}=8 / 255$} \\
			\hline \multirow{2}{*}{ Method } &\multirow{2}{*}{ SA (\%) }&  \multicolumn{2}{c|}{ PGD-10 (\%) }& \multicolumn{2}{c|}{ PGD-50 (\%) }  &\multicolumn{2}{c}{ AutoAttack (\%) } \\
			& & \multicolumn{1}{c|}{$\boldsymbol{\epsilon}=8 / 255$} &  \multicolumn{1}{c|}{$\boldsymbol{\epsilon}=16 / 255$}&  \multicolumn{1}{c|}{$\boldsymbol{\epsilon}=8 / 255$} &  \multicolumn{1}{c|}{$\boldsymbol{\epsilon}=16 / 255$}&    \multicolumn{1}{c|}{$\boldsymbol{\epsilon}=8 / 255$} &\multicolumn{1}{c}{$\boldsymbol{\epsilon}=16/ 255$ } \\
			\cmidrule{1-8} 
			PGD-AT & $\mathbf{49.457}$\footnotesize{$\pm0.48$}& $25.837$\footnotesize{$\pm0.43$}& $9.980$\footnotesize{$\pm0.43$}&$25.377$\footnotesize{$\pm0.39$}& $8.749$\footnotesize{$\pm0.38$} &$21.109$\footnotesize{$\pm0.05$}
			&$6.667$\footnotesize{$\pm0.23$} \\
			\rowcolor{gray!15} LPGD-AT (Ours) &  $48.150$\footnotesize{$\pm0.42$} & $\mathbf{31.267}$\footnotesize{$\pm0.53$}&$\mathbf{14.903}$\footnotesize{$\pm0.26$}& $\mathbf{30.857}$\footnotesize{$\pm0.62$} & $\mathbf{13.573}$\footnotesize{$\pm0.52$}& $\mathbf{23.113}$\footnotesize{$\pm0.18$}&$\mathbf{8.033}$\footnotesize{$\pm0.40$}\\
			
			\hline \bottomrule
			
		\end{tabular}
	}
\end{table*}

\begin{figure*}[htbp]
	%			\vspace{-0.3cm}
	\begin{center}
		%		\renewcommand\arraystretch{0.5}
		%		\begin{tabular}{@{\extracolsep{-0.3em}}c@{\extracolsep{-0.1em}}}
			\includegraphics[height=4.0cm,width=18cm,trim=80 0 80 0,clip]{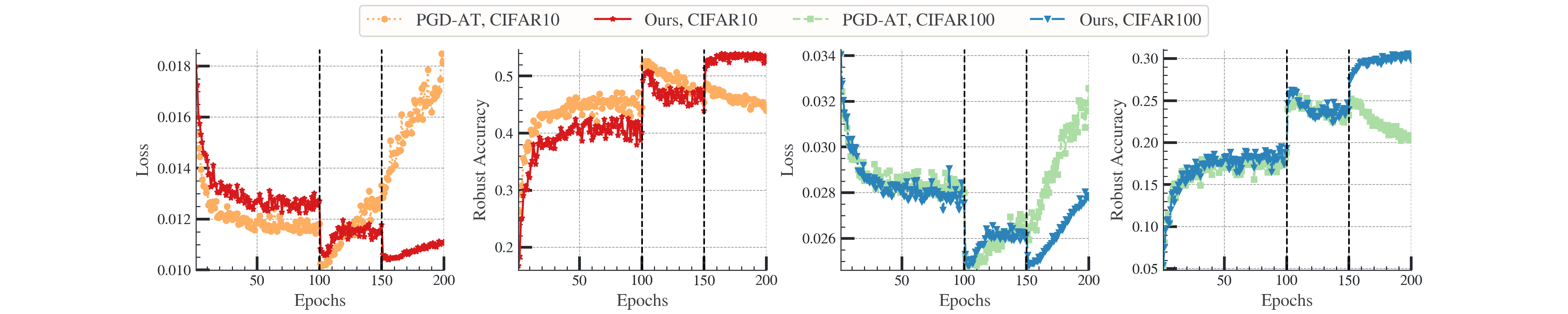}\\
			%\specialrule{0em}{1pt}{3pt}
			%			\specialrule{0em}{0pt}{0pt}
			%			\footnotesize \quad\quad\quad\quad\quad(a) CIFAR10 \quad\quad &\footnotesize \quad\quad\quad(b) CIFAR100 \quad\quad \\
			%		\end{tabular}
	\end{center}
	\caption{The four subfigures compare the convergence behavior of test robust loss and RA trained with PGD-AT and LAST, $\boldsymbol{\epsilon}=8/255$ on CIFAR10 dataset and CIFAR100 dataset. The black dashed line denotes the epoch where multi-step learning rate decays.}\label{fig:PGD-10}
\end{figure*} 

\textbf{Alleviating catastrophic overfitting with SD.} Furthermore, in Fig.~\ref{fig:fast_at}, we investigate the phenomenon of catastrophic overfitting when facing stronger adversaries. It is evident that during the AT process with large perturbation sizes, the robustness loss sharply increases, and the robust accuracy (RA) suddenly drops to 0. Additionally, we visualize the adversarial loss landscape of the best model obtained using Fast-AT with early stopping. The loss gaps for Fast-AT and ours within the range of $x, y\in[-0.5,0.5]$ are also reported along the color bar. Notably, influenced by the injected perturbation and random noise, the loss surface exhibits high values and lacks smoothness, resulting in a significant loss gap (More detailed definition about the loss landscape could refer to Fig.~\ref{fig:loss_landscape}. As a point of comparison, we also report on the convergence behavior and loss landscape after incorporating the SD loss. Within the LAST framework, the surrogate model’s update direction is adjusted based on the prior information distilled from soft labels. This correction leads to a more stable convergence behavior, an enhancement in robustness, and a smoother loss surface overall.

\subsection{Evaluation with Multi-Step AT methods}
 In this subsection, we demonstrate the effectiveness of LAST by improving multi-step AT methods across different datasets and attacks.

\textbf{Quantitative results on commonly used datasets.} In this subsection, we improve the PGD-based adversarial training method under the proposed LAST framework, thereby substantiating the efficacy of LAST in the realm of multi-step AT techniques. In Tab.~\ref{tab:AT_results3}, we first report the SA and RA of PGD-AT and our improved version (abbreviated as LPGD-AT) on the CIFAR10 and CIFAR100 datasets. We also use $m${\footnotesize$\pm n$} to report the average accuracy and standard deviation of different methods. From the table, we can see that although the improved version under the LAST framework still has a reduced clean accuracy, it has achieved more significant performance improvements on PGD and AutoAttack with different perturbation sizes (in the last column of the table, it achieved a $9.2\%$ and $20.3\%$ accuracy improvement relative to PGD-AT on the two datasets), and still shows lower standard deviation in most results. In a particularly compelling demonstration of LAST's potential, the CIFAR10 dataset not only witnesses an elevation in RA but also enjoys an enhancement in clean accuracy, thereby underscoring the LAST framework's capacity to significantly bolster a wide array of multi-step AT methods.

\textbf{Training convergence behavior.} In addition, we analyze the convergence behavior of robust loss and RA for both PGD-AT and LPGD-AT on the CIFAR10 and CIFAR100 datasets in Fig.~\ref{fig:PGD-10}. It is worth noting that we adopt 200 epochs widely used by existing methods for training multi-step AT methods. The two black vertical dashed lines in the figure denote the epochs at which multi-step learning rate decay is applied. It is evident that for both datasets, PGD-AT achieves relatively optimal robust loss and robust accuracy after the first learning rate decay, with subsequent training failing to enhance performance. In contrast, the LPGD-AT, improved by the LAST framework, enhances the consistency between the model's historical parameter states, resulting in a smoother curve and more stable convergence in robust loss and RA. Meanwhile, LPGD-AT achieves higher performance after two learning rate decays. Furthermore, as depicted in Fig.~\ref{fig:loss_landscape}, LPGD-AT exhibits a smoother surface and smaller loss gaps, further validating the LAST framework's ability to enhance the model's robustness against varying adversarial perturbations and random perturbations in adversarial samples.

%In addition, we compare the convergence behavior of robust loss and RA for PGD-AT and our LPGD-AT on both CIFAR10 and CIFAR100 datasets in Fig.~\ref{fig:PGD-10}. As it is illustrated, by improving the consistency among the historical states of model parameters, LPGD-AT exhibits more stable convergence behavior of both robust loss and accuracy, and finally gains higher performance after performing the multi-step learning rate decay twice. Besides, as it can be observed in subfigure (d) of  Fig.~\ref{fig:loss_landscape}, the loss landscape of model trained with LAST framework has been rendered smoother, accompanied by a reduced loss gap between its highest and lowest values, which demonstrates that the model obtained by LAST is more robust in the presence of different strengths of adversarial perturbations and random perturbations injected in the adversarial samples.

\begin{table}[h!]
	\centering
	\caption{We report the SA and RA of PGD-AT, TRADES, DyART and LAST (Ours) under different perturbation sizes of AutoAttack on TinyImageNet dataset based on PARN-18 backbone. }
	\label{tab:AT_results4}
	\renewcommand\arraystretch{1.4}
	\setlength{\tabcolsep}{0.8mm}{
		\begin{tabular}{c|c|c|c|c}
			\toprule \hline \multicolumn{5}{c}{ TinyImageNet dataset, PARN-18 backbone trained with $\boldsymbol{\epsilon}=8 / 255$} \\
			\hline \multirow{2}{*}{ Method } &\multirow{2}{*}{ SA (\%) }&  \multicolumn{3}{c}{ RA (\%), AutoAttack } \\
			&& \multicolumn{1}{c|}{$\boldsymbol{\epsilon}=2 / 255$} &  \multicolumn{1}{c|}{$\boldsymbol{\epsilon}=4 / 255$}&  \multicolumn{1}{c}{$\boldsymbol{\epsilon}=8 / 255$}\\
			\cmidrule{1-5}
			PGD-AT &$48.09$&$38.82$&$30.18$&$16.46$ \\
			TRADES & $46.68$&$37.84$&$29.85$&$16.76$\\
			%		MART&$45.51$&$36.68$&$29.15$&$17.79$ \\
			DyART&$48.38$&$38.46$&$29.69$&$17.52$\\
			\cmidrule{1-5} 
			\rowcolor{gray!15} LAST (Ours)&  $\mathbf{49.07} $ & $\mathbf{39.38}$&$\mathbf{30.54} $& $\mathbf{17.57}$ \\
			%			\rowcolor{gray!15} LAST (Ours)&  $\mathbf{50.85} $ & $\mathbf{40.64}$&$\mathbf{31.54} $& $\mathbf{17.86}$ & $\mathbf{10.05}$&$\mathbf{5.87}$\\
			%						\rowcolor{gray!15} LAST (Ours)&  $45.74 $ & $\mathbf{39.64}$&$\mathbf{30.54} $& $\mathbf{17.86}$ & $\mathbf{10.05}$&$\mathbf{5.87}$\\
			\hline \bottomrule
		\end{tabular}
	}
\end{table}

\textbf{Quantitative comparison with SOTA methods.} Furthermore, we have extended the implementation of the LAST framework to the the larger TinyImageNet dataset, where we conduct the comparison of its performance in terms of SA and RA against several established AT methods, including PGD-AT, TRADES, and DyART. TRADES stands out as a representative method that builds upon the foundation of PGD-AT, while DyART distinguishes itself by utilizing different types of attack and defense process that is distinct from SAT, and has achieved the best robust accuracy based on the PreResNet-18 backbone in the RobustBench rankings. It is important to highlight that the SWA technique, which we previously discussed as a method that utilizes historical model weights, has been incorporated into these methods. This integration serves to further validate the efficacy of the LAST framework's approach to guiding model updates in the context of adversarial defense. In Tab.~\ref{tab:AT_results4}, our enhanced implementation not only demonstrates superior SA but also consistently outperforms the other methods across all three attack strengths. This consistent performance improvement underscores the LAST framework's potential to significantly improve upon current multi-step AT methodologies. 
%
%Additionally, we have implemented the LAST framework on the larger TinyImageNet dataset and compared its clean accuracy and robust accuracy against PGD-AT, TRADES, and DyART under AutoAttack with varying perturbation strengths. TRADES is a representative method based on PGD-AT, while DyART employs an attack and defense update scheme different from SAT and has achieved the best robust accuracy based on the PreResNet-18 architecture in the RobustBench performance rankings. It is noteworthy that the SWA technique, which leverages historical weights as discussed earlier, is also integrated into the relevant methods, thereby further substantiating the effectiveness of the LAST framework's proxy model-guided update rules in adversarial defense. In Table \ref{tab:AT_results4}, we compare the clean accuracy and robust accuracy of different methods under three levels of perturbation strength. It is readily observable that our improved method not only achieves better clean accuracy but also outperforms others under all three attack strengths, thereby demonstrating the potential of the LAST framework to enhance existing multi-step adversarial training (AT) methods.
%
%此外，我们同样在更大的TinyImageNet 数据集上实现了LAST框架，并与PGD-AT, TRADES 和DyART在的不同扰动强度的AutoAttack下进行了干净精度和鲁棒精度的对比，其中TRADES是基于PGD-AT的代表性方法，而DyART使用与SAT不同的攻击与防御更新方案，并且基于PreResNet-18结构在RobustBench的性能排行中取得了最好的鲁棒精度。值得注意的是，在前文的讨论中，我们提到的利用历史权重的SWA技术同样集成在相关的方法中，从而可以进一步证明LAST框架下代理模型引导的更新规则在对抗防御中的有效性。在表Tab.~\ref{tab:AT_results4}中，我们比较了不同方法的干净精度以及在三种扰动强度下的鲁棒精度。很容易观察到，我们的改进方法不仅获得了更好的干净精度，同时也在三种不同的攻击强度下均取得了更好的性能，从而证明了LAST框架改进现有多步AT方法的潜力。

\subsection{Ablation Results}
In this subsection, we design ablation experiments to verify the effectiveness of the SD defense objective function as well as the aggregation coefficient $\boldsymbol{\gamma}$, respectively.

\begin{figure}[h!]
	\begin{center}
		\begin{tabular}{@{\extracolsep{0.2em}}c@{\extracolsep{0.2em}}c@{\extracolsep{-0.1em}}}
			\multicolumn{2}{c}{\includegraphics[height=0.4cm,width=8.2cm,trim=220 270 200  0,clip]{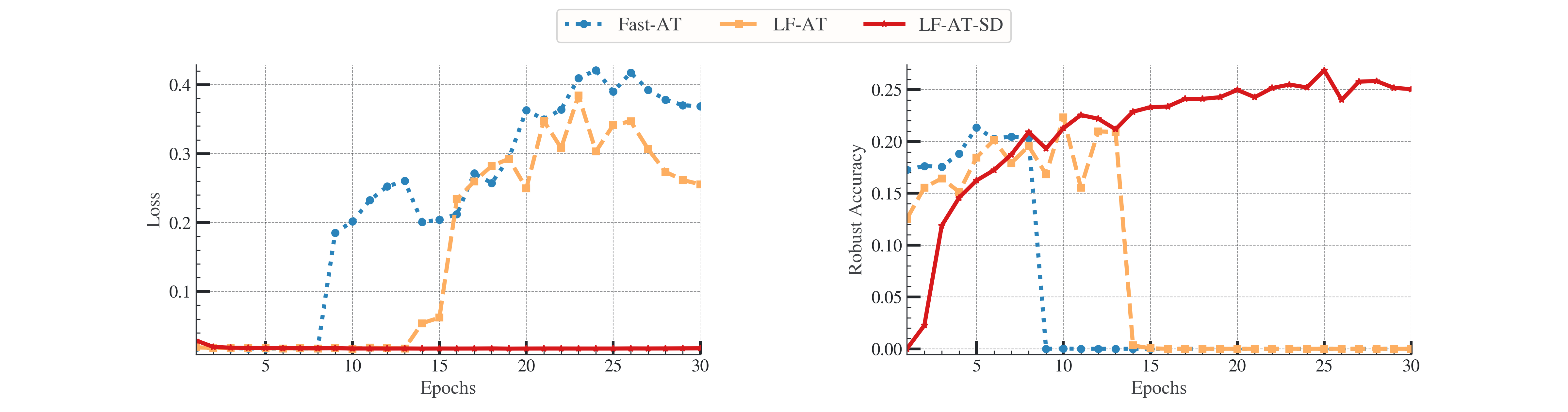} } \\
			\includegraphics[height=3cm,width=4.4cm,trim=0 0 0  0,clip]{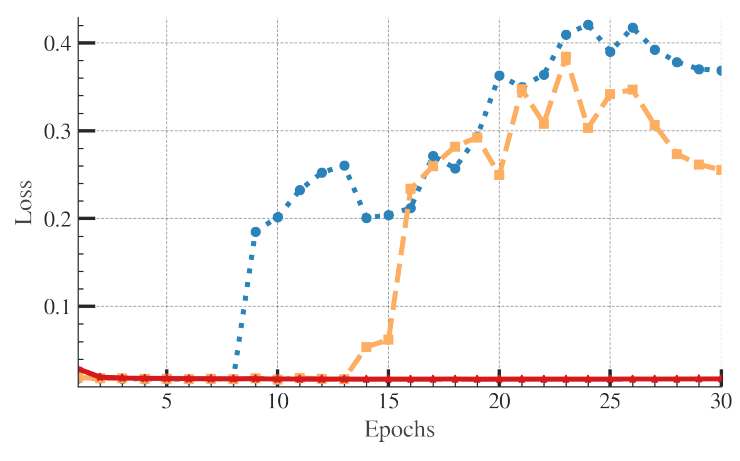}  &
			\includegraphics[height=3cm,width=4.4cm,trim=0 0 0  0,clip]{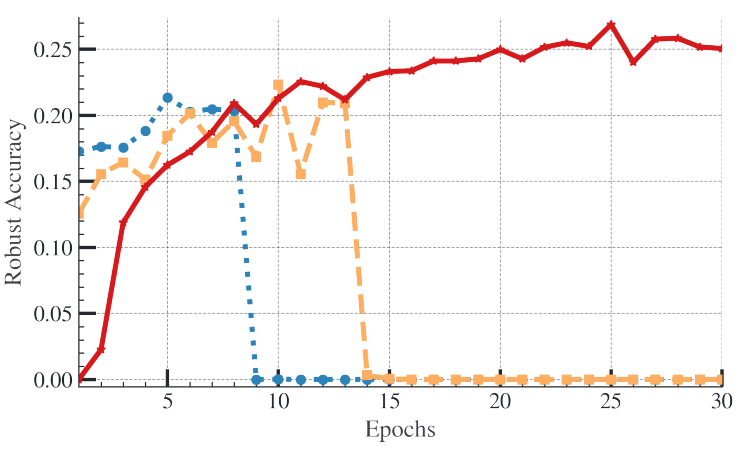}  \\
			%\specialrule{0em}{1pt}{3pt}
			%			\specialrule{0em}{0pt}{0pt}
			%			\footnotesize \quad\quad\quad\quad\quad(a) CIFAR10 \quad\quad &\footnotesize \quad\quad\quad(b) CIFAR100 \quad\quad \\
		\end{tabular}
	\end{center}
	\vspace{-0.3cm}
	\caption{We compare the convergence behavior of Fast-AT, the improved version under LAST framework with and without SD defense objective (denoted as LF-AT and LF-AT-SD, respectively).}\label{fig:ablation_SD}
\end{figure} 

\textbf{Ablation results of the SD defense objective.}  In Figure \ref{fig:ablation_SD}, we present more detailed comparison of the convergence of test robust loss and RA to underscore the impact of incorporating the SD defense objective within the LAST framework on the CIFAR10 dataset, with perturbation size of $\boldsymbol{\epsilon} = 16/255$. The results indicate that the integration of the LAST framework alone has a mitigating effect on the degradation of loss and accuracy during training, evidenced by a slower decline in performance, which suggests that the LAST framework contributes to a more stable training process. However, the most significant improvements are observed when the LAST framework is combined with the SD defense objective. The inclusion of the SD objective leads to a consistent enhancement in the convergence behavior of both test robust loss and RA. These results highlight the potential of the LAST framework to serve as a versatile and powerful tool in the adversarial defense process, particularly when paired with additional techniques like the SD defense objective.

%In Fig.~\ref{fig:ablation_SD}, we provide more detailed comparative results of the convergence behavior of test robust loss and RA to demonstrate the effectiveness of SD objective on CIFAR10 dataset with $\boldsymbol{\epsilon}=16/255$, which serves as a supplement to Fig.~3. As it is shown, combining the LAST framework will slow down the collapse of loss and accuracy and improve the best performance to some extent. When we integrate the LAST framework together with SD defense objective, the convergence behavior of test robust loss and RA are consistently improved and leads to boost of robustness. Unless specified otherwise, we only implement the LAST framework without SD objective to report the performance based on the same configuration of defense objectives and common hyperparameters.

\begin{figure}[h!]
	\begin{center}
		\renewcommand\arraystretch{0.8}
		\begin{tabular}{c@{\extracolsep{0.1em}}}
			\includegraphics[height=3.6cm,width=9.cm,trim=20 0 0  0,clip]{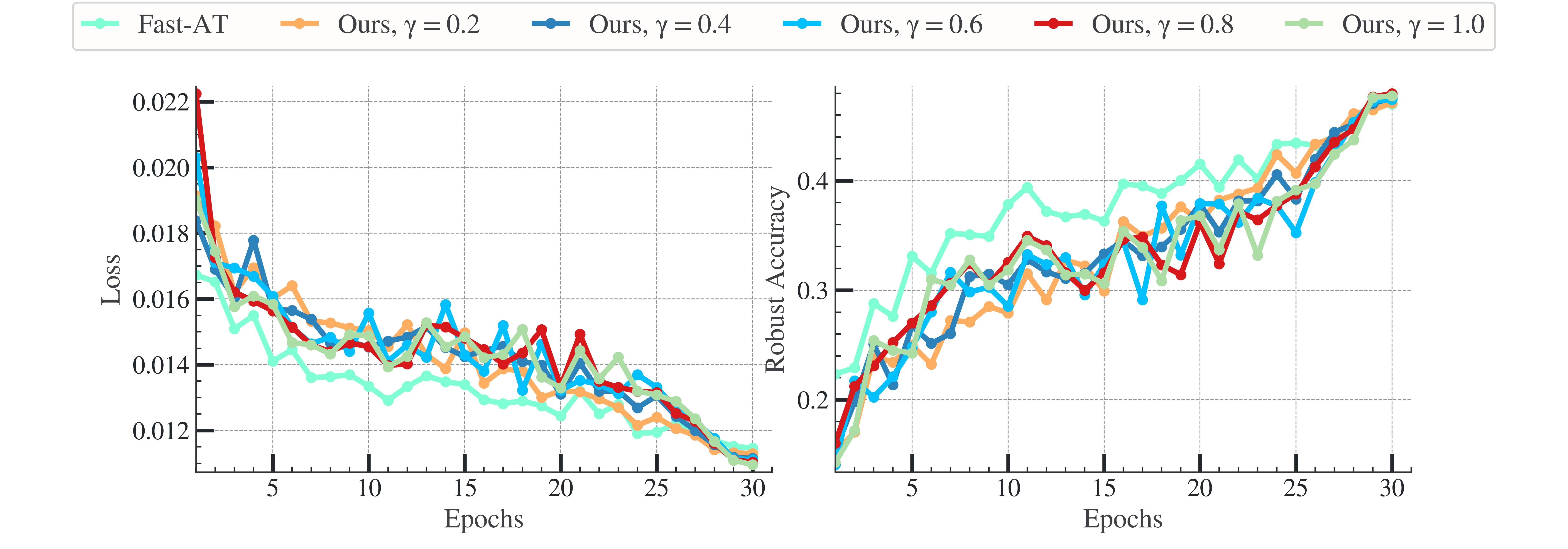} \\
			%\specialrule{0em}{1pt}{3pt}
			\specialrule{0em}{0pt}{0pt}
%			\footnotesize (a) Validation Loss \\
		\end{tabular}
	\end{center}
	%\vspace{-0.4cm}
	\caption{Illustration of convergence behavior of Fast-AT and our improved version under LAST framework as $\boldsymbol{\gamma}$ varies. We selected five discrete values from (0.0, 1.0] for ablation.}\label{fig:ablation}
\end{figure}

\textbf{Ablation results of $\boldsymbol{\gamma}$.} In Fig.~\ref{fig:ablation}, we conduct detailed examination of the impact of the aggregation coefficient $\boldsymbol{\gamma}$ specific to the LAST framework, on the PARN-18 model trained on the CIFAR10 dataset with the perturbation size of $\boldsymbol{\epsilon} = 8/255$. The figure provides insights into how varying $\boldsymbol{\gamma}$ influences the convergence behavior of our improved version in comparison to the Fast-AT baseline. The results reveal several key points:
\begin{itemize}
	\item Initially, the improved method exhibits a slightly slower convergence rate compared to Fast-AT. However, as training proceeds, it manages to achieve lower robust loss  and higher RA. This suggests that while the initial learning pace may be conservative, the LAST framework is capable of assisting the model to better withstand adversarial attacks over time.
	\item Across the range of $\boldsymbol{\gamma}$ values used for ablation, our method consistently outperforms Fast-AT. This indicates that the LAST framework, which incorporates the aggregation coefficient, is effective in enhancing the robustness of the model against adversarial perturbations.
	\item Specifically, there is a slight dip in RA when $\boldsymbol{\gamma}$ transitions from $0.8$ to $1.0$. This suggests that while a higher value of $\boldsymbol{\gamma}$ can lead to better performance, there may be an optimal range for this hyperparameter that balances the speed of convergence with the ultimate robustness achieved.
\end{itemize}
Based on these observations, we have chosen to set $\boldsymbol{\gamma} = 0.8$ for the quantitative experiments presented earlier, unless otherwise specified. This decision reflects a strategic balance between the need for rapid convergence and the desire to maximize robust accuracy. By selecting this value, we aim to optimize the performance of LAST framework, ensuring that it can effectively generalize its robustness to a wide range of adversarial scenarios.

%In Fig.~\ref{fig:ablation}, we further investigate the influence of hyperparameter unique to our method, the aggregation coefficient $\boldsymbol{\gamma}$ based on PARN-18 on CIFAR10 dataset, $\boldsymbol{\epsilon}=8/255$. Firstly, it can be observed that also the improved method converges a bit slower than Fast-AT at the beginning, it gains lower loss and higher RA as the training proceeds. Secondly, although our methods always gain better performance than Fast-AT as $\boldsymbol{\gamma}$ changes, the RA slightly decreases when $\boldsymbol{\gamma}$ approaches $1$ ($0.8\rightarrow1.0$). Based on the above observation, we set $\boldsymbol{\gamma}=0.8$ for the above quantitative experiments unless specified otherwise.

\subsection{Evaluation of Generalization Performance}

In this subsection, we verify the generalization capability of the LAST framework by its performance on black-box attacks as well as on out-of-distribution samples.

\begin{figure}[h!]
	\begin{center}
		\begin{tabular}{@{\extracolsep{0.2em}}c@{\extracolsep{-0.2em}}c@{\extracolsep{-0.1em}}}
			\includegraphics[height=3.75cm,width=4.5cm,trim=0 0 0   0,clip]{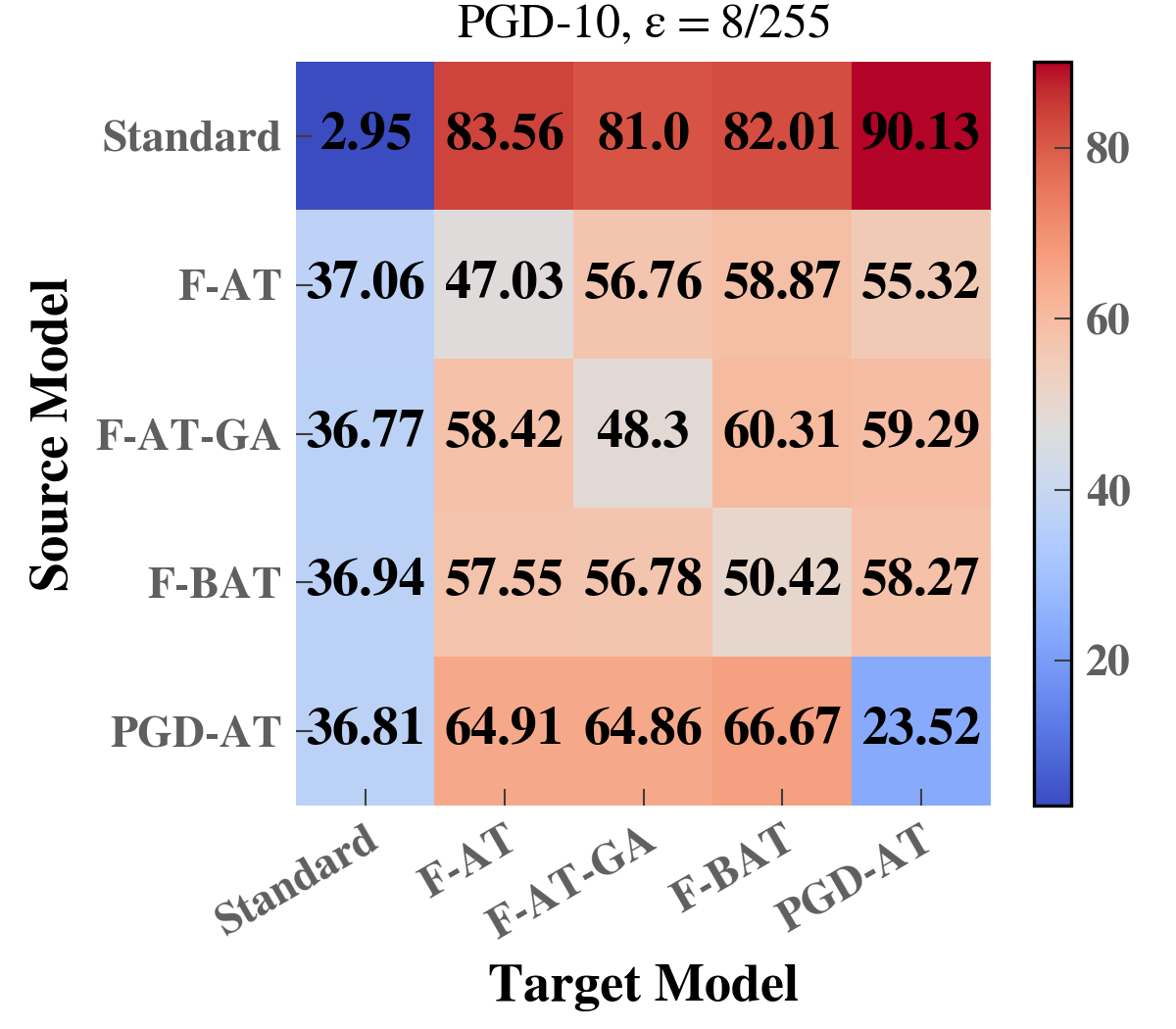} &\includegraphics[height=3.75cm,width=4.5cm,trim=0 0 0 0,clip]{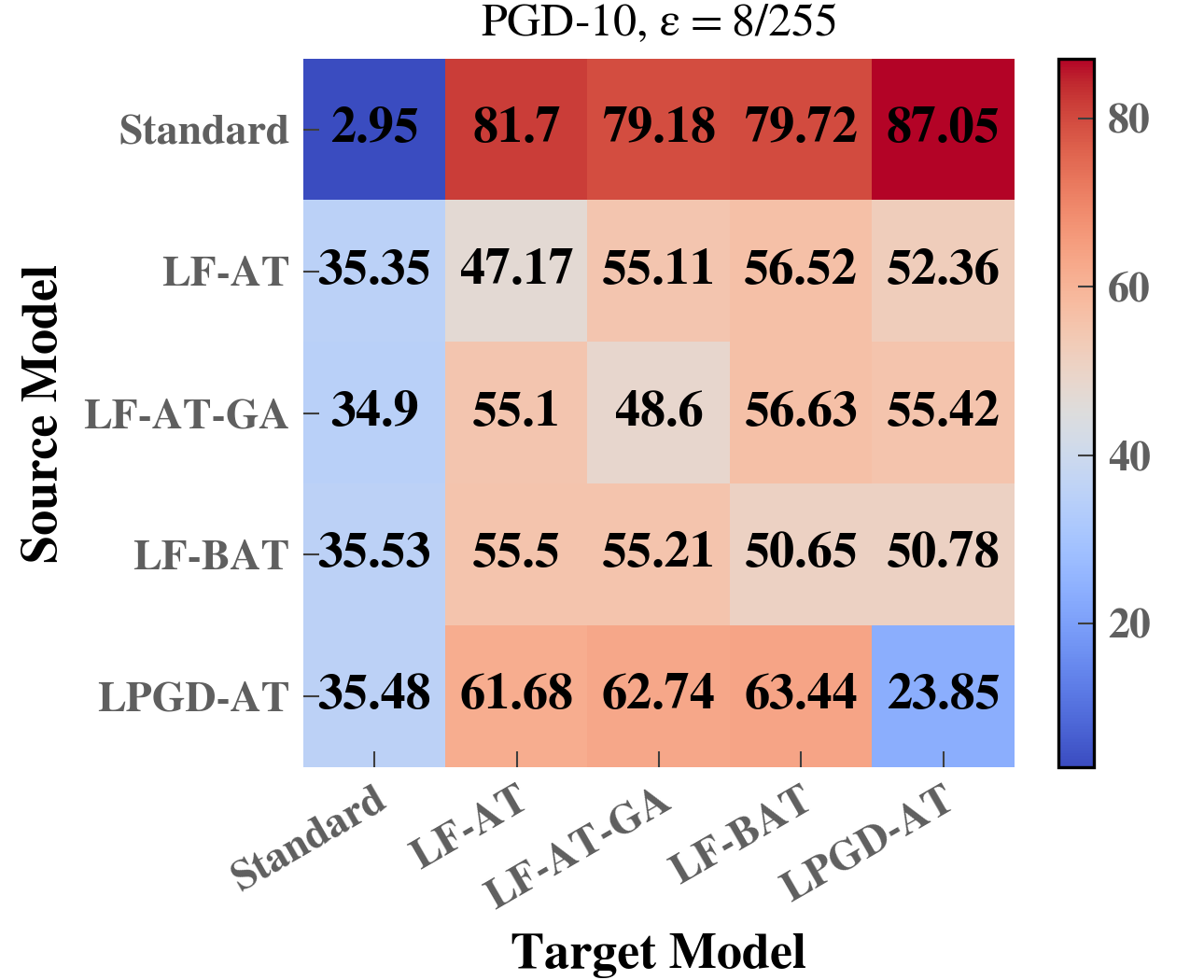} \\
			\footnotesize (a) SAT methods &\footnotesize (b) LAST (Ours)	\\ 
			%\specialrule{0em}{1pt}{3pt}
			%			\specialrule{0em}{0pt}{0pt}
			%			\footnotesize \quad\quad\quad\quad\quad(a) CIFAR10 \quad\quad &\footnotesize \quad\quad\quad(b) CIFAR100 \quad\quad \\
		\end{tabular}
	\end{center}
	\caption{We visualize the heatmap of four SAT methods including Fast-AT, Fast-AT-GA, Fast-BAT, PGD-AT (i.e., 2-step PGD-AT) and their improved version under transfer-based PGD-10 attack on CIFAR10 dataset. We use F-AT and LF-AT as abbreviations for Fast-AT and LFast-AT, and the other abbreviations are similar. It should be noted that the term ‘standard' refers to the test RA of models trained on clean data.}\label{fig:heat_map}
\end{figure}

\textbf{Validation with transfer-based attacks.} To validate the generalization performance of LAST, we explore the robustness of various SAT methods when subjected to black-box attacks. Specifically, we focus on the effectiveness of these methods against transfer-based black-box adversarial attacks, which are a practical concern in real-world scenarios where the attacker's model (source) may differ from the model being attacked (target). The heatmaps presented in Fig.~\ref{fig:heat_map} illustrate the RA of different SAT methods when they are attacked with adversarial perturbations generated by source models trained with Fast-AT, Fast-AT-GA, Fast-BAT, PGD-AT (which stands for two-step PGD-AT), and their respective improved versions under the LAST framework, all subjected to a transfer-based PGD-10 attack on the CIFAR10 dataset.

One of the key observation from the heatmaps is that adversarial attacks generated from source models trained with LAST are more difficult to defend against. This suggests that the LAST framework results in models that are more robust, and it implies that the models are less likely to be compromised by adversarial examples crafted based on different underlying models. Furthermore, the heatmaps also verify that both the original and improved SAT methods perform better under transfer-based attacks compared to white-box attacks.

%
%We also conduct analysis about the robustness of defense against black-box attacks. Practically, we plot the heatmaps of RA for different SAT methods against the transfer-based black-box adversarial attacks on CIFAR10 dataset under PGD-10 attack in Fig.~\ref{fig:heat_map}. Note that the source model are used to generate the adversarial perturbation to attack the target models. It is shown that adversarial attacks generated based on the source models trained by LAST are more difficult to defend, and both original methods and improved ones perform better under transfer-based attacks than white-box attacks.  

\begin{table}[!t]
	\centering
	\caption{Illustrating results of the RA for Fast-AT, Fast-AT-GA, Fast-BAT, PGD-AT and corresponding improved version under different perturbation sizes of PGD-10 and PGD-50 attack. The out-of-distribution samples are crafted with mixup technique on the CIFAR10 dataset.}
	\label{tab:AT_results8}
	\renewcommand\arraystretch{1.4}
	\setlength{\tabcolsep}{0.4mm}{
		\begin{tabular}{c|c|c|c|c}
			\toprule \hline \multicolumn{5}{c}{ CIFAR-10 dataset, PARN-18 backbone trained with $\boldsymbol{\epsilon}=8 / 255$} \\
			\hline \multirow{2}{*}{ Method } &  \multicolumn{2}{c|}{ PGD-10 (\%) }& \multicolumn{2}{c}{ PGD-50 (\%) }  \\
			& \multicolumn{1}{c|}{$\boldsymbol{\epsilon}=8 / 255$} &  \multicolumn{1}{c|}{$\boldsymbol{\epsilon}=16 / 255$}&  \multicolumn{1}{c|}{$\boldsymbol{\epsilon}=8 / 255$} &   \multicolumn{1}{c}{$\boldsymbol{\epsilon}=16/ 255$} \\
			\cmidrule{1-5} 
			Fast-AT  & $46.14$ & $14.46$ & $43.95$ & $9.25$ \\
			\rowcolor{gray!15} LFast-AT (Ours) & $\mathbf{46.37}$ & $\mathbf{14.77}$ & $\mathbf{44.79}$ & $\mathbf{10.33}$ \\
			\cmidrule{1-5}
			Fast-AT-GA & $48.27$ & $16.52$ & $45.76$ & $11.48$\\ 
			\rowcolor{gray!15}LFast-AT-GA  (Ours)& $\mathbf{48.65}$ & $\mathbf{17.26}$ & $\mathbf{46.57}$ & $\mathbf{12.73}$\\
			\cmidrule{1-5}
			Fast-BAT & $49.24$ & $18.41$ & $47.94$ & $13.64$\\
			\rowcolor{gray!15}LFast-BAT  (Ours)& $\mathbf{49.58}$ & $\mathbf{19.69}$ & $\mathbf{48.75}$& $\mathbf{15.48}$\\
			\cmidrule{1-5}
			PGD-AT & $49.53$ & $18.57$ & $47.88$ & $13.64$\\
			\rowcolor{gray!15}LPGD-AT  (Ours)& $\mathbf{51.87}$ & $\mathbf{21.90}$& $\mathbf{50.96}$& $\mathbf{17.70}$\\
			\hline \bottomrule
			
		\end{tabular}
	}
\end{table}

 \textbf{Results on out-of-distribution samples.} Last but not least, we validate the efficacy of the LAST framework when dealing with out-of-distribution (OOD) samples. To this end, we employed the PreActResNet-18 model, which was trained on the CIFAR10 dataset. To generate novel OOD samples, we applied the mixup technique on the validation subset of the CIFAR10 dataset. This approach allowed us to create a diverse set of samples that are not directly represented in the training data, thus simulating a more realistic scenario where models are exposed to data that deviates from the training distribution. In Tab~\ref{tab:AT_results8}, we compare the RA results for each method at two different perturbations sizes including $\boldsymbol{\epsilon} = 8/255$ and $\boldsymbol{\epsilon} = 16/255$.
  
It can be clearly observed that models trained by LAST consistently outperform these SAT methods in terms of robustness against adversarial attacks that utilize OOD samples. This is evidenced by the higher RA reported for SAT methods and their corresponding improved version, which suggests its generalization capabilities in defending against these adversarial examples that is distributionally different from the training data, a common challenge in real-world applications.

\section{Conclusion}~\label	{sec:conclusion}

In this research, we introduce LAST, an innovative adversarial defense framework designed to augment the SAT pipeline. This framework is distinguished by introducing the historical states of the target model as a proxy, which corrects the model's updates through a proxy-guided rule. This dynamic learned fast weights and calculated differential unit ensures that the model's learning trajectory is continuously refined, addressing the challenges of instability. We also incorporate a self-distillation defense objective, which requires no additional pretrained teacher models, thereby simplifying the training process and alleviating catastrophic overfitting. Our comprehensive experimental evaluations, which span across various datasets, backbone architectures, and attack types, consistently demonstrate LAST's ability to improve robust accuracy and training stability. The LAST framework, with its novel contributions to the field of AT, explores the potential of leveraging historical model knowledge to achieve state-of-the-art robustness.

\textbf{Broader Impact.} The proxy-guided update rule, a core innovation of the LAST framework, holds promise beyond the realm of adversarial robustness, and could be applied to a variety of scenarios where training instability is exacerbated by factors like poor data quality. The potential applications span across different domains, including but not limited to, image and speech recognition systems where noisy or biased training data are common. By enhancing the model's training stability and final performance, the LAST framework could lead to more reliable and robust AI systems, capable of generalizing better to real-world conditions. 

\ifCLASSOPTIONcaptionsoff
\newpage
\fi

\bibliographystyle{IEEEtran}
        \bibliography{TCSVT}

% Generated by IEEEtran.bst, version: 1.14 (2015/08/26)
\begin{thebibliography}{10}
\providecommand{\url}[1]{#1}
\csname url@samestyle\endcsname
\providecommand{\newblock}{\relax}
\providecommand{\bibinfo}[2]{#2}
\providecommand{\BIBentrySTDinterwordspacing}{\spaceskip=0pt\relax}
\providecommand{\BIBentryALTinterwordstretchfactor}{4}
\providecommand{\BIBentryALTinterwordspacing}{\spaceskip=\fontdimen2\font plus
\BIBentryALTinterwordstretchfactor\fontdimen3\font minus
  \fontdimen4\font\relax}
\providecommand{\BIBforeignlanguage}[2]{{%
\expandafter\ifx\csname l@#1\endcsname\relax
\typeout{** WARNING: IEEEtran.bst: No hyphenation pattern has been}%
\typeout{** loaded for the language `#1'. Using the pattern for}%
\typeout{** the default language instead.}%
\else
\language=\csname l@#1\endcsname
\fi
#2}}
\providecommand{\BIBdecl}{\relax}
\BIBdecl

\bibitem{krizhevsky2012imagenet}
A.~Krizhevsky, I.~Sutskever, and G.~E. Hinton, ``Imagenet classification with
  deep convolutional neural networks,'' \emph{Advances in neural information
  processing systems}, vol.~25, 2012.

\bibitem{jian2016deep}
S.~Jian, H.~Kaiming, R.~Shaoqing, and Z.~Xiangyu, ``Deep residual learning for
  image recognition,'' in \emph{IEEE Conference on Computer Vision \& Pattern
  Recognition}, 2016, pp. 770--778.

\bibitem{huang2023t}
H.~Huang, Z.~Chen, H.~Chen, Y.~Wang, and K.~Zhang, ``T-sea: Transfer-based
  self-ensemble attack on object detection,'' in \emph{Proceedings of the
  IEEE/CVF Conference on Computer Vision and Pattern Recognition}, 2023, pp.
  20\,514--20\,523.

\bibitem{kurakin2018adversarial}
A.~Kurakin, I.~J. Goodfellow, and S.~Bengio, ``Adversarial examples in the
  physical world,'' in \emph{Artificial intelligence safety and
  security}.\hskip 1em plus 0.5em minus 0.4em\relax Chapman and Hall/CRC, 2018,
  pp. 99--112.

\bibitem{carlini2017adversarial}
N.~Carlini and D.~Wagner, ``Adversarial examples are not easily detected:
  Bypassing ten detection methods,'' in \emph{Proceedings of the 10th ACM
  workshop on artificial intelligence and security}, 2017, pp. 3--14.

\bibitem{gu2023survey}
J.~Gu, X.~Jia, P.~de~Jorge, W.~Yu, X.~Liu, A.~Ma, Y.~Xun, A.~Hu, A.~Khakzar,
  Z.~Li \emph{et~al.}, ``A survey on transferability of adversarial examples
  across deep neural networks,'' \emph{arXiv preprint arXiv:2310.17626}, 2023.

\bibitem{dai2018cross}
P.~Dai, R.~Ji, H.~Wang, Q.~Wu, and Y.~Huang, ``Cross-modality person
  re-identification with generative adversarial training.'' in \emph{IJCAI},
  vol.~1, no.~3, 2018, p.~6.

\bibitem{szegedy2013intriguing}
C.~Szegedy, W.~Zaremba, I.~Sutskever, J.~Bruna, D.~Erhan, I.~Goodfellow, and
  R.~Fergus, ``Intriguing properties of neural networks,'' \emph{arXiv preprint
  arXiv:1312.6199}, 2013.

\bibitem{papernot2016distillation}
N.~Papernot, P.~McDaniel, X.~Wu, S.~Jha, and A.~Swami, ``Distillation as a
  defense to adversarial perturbations against deep neural networks,'' in
  \emph{2016 IEEE symposium on security and privacy (SP)}.\hskip 1em plus 0.5em
  minus 0.4em\relax IEEE, 2016, pp. 582--597.

\bibitem{chen2020robust}
T.~Chen, Z.~Zhang, S.~Liu, S.~Chang, and Z.~Wang, ``Robust overfitting may be
  mitigated by properly learned smoothening,'' in \emph{International
  Conference on Learning Representations}, 2020.

\bibitem{latorre2023finding}
F.~Latorre, I.~Krawczuk, L.~T. Dadi, T.~M. Pethick, and V.~Cevher, ``Finding
  actual descent directions for adversarial training,'' in \emph{11th
  International Conference on Learning Representations (ICLR)}, 2023.

\bibitem{zhang2019theoretically}
H.~Zhang, Y.~Yu, J.~Jiao, E.~Xing, L.~El~Ghaoui, and M.~Jordan, ``Theoretically
  principled trade-off between robustness and accuracy,'' in
  \emph{International conference on machine learning}.\hskip 1em plus 0.5em
  minus 0.4em\relax PMLR, 2019, pp. 7472--7482.

\bibitem{dong2020adversarial}
Y.~Dong, Z.~Deng, T.~Pang, J.~Zhu, and H.~Su, ``Adversarial distributional
  training for robust deep learning,'' \emph{Advances in Neural Information
  Processing Systems}, vol.~33, pp. 8270--8283, 2020.

\bibitem{madry2017towards}
A.~Madry, A.~Makelov, L.~Schmidt, D.~Tsipras, and A.~Vladu, ``Towards deep
  learning models resistant to adversarial attacks,'' \emph{arXiv preprint
  arXiv:1706.06083}, 2017.

\bibitem{shafahi2019adversarial}
A.~Shafahi, M.~Najibi, M.~A. Ghiasi, Z.~Xu, J.~Dickerson, C.~Studer, L.~S.
  Davis, G.~Taylor, and T.~Goldstein, ``Adversarial training for free!''
  \emph{Advances in Neural Information Processing Systems}, vol.~32, 2019.

\bibitem{rebuffi2022revisiting}
S.-A. Rebuffi, F.~Croce, and S.~Gowal, ``Revisiting adapters with adversarial
  training,'' \emph{arXiv preprint arXiv:2210.04886}, 2022.

\bibitem{yuan2021meta}
Z.~Yuan, J.~Zhang, Y.~Jia, C.~Tan, T.~Xue, and S.~Shan, ``Meta gradient
  adversarial attack,'' in \emph{Proceedings of the IEEE/CVF International
  Conference on Computer Vision}, 2021, pp. 7748--7757.

\bibitem{goodfellow2014explaining}
I.~J. Goodfellow, J.~Shlens, and C.~Szegedy, ``Explaining and harnessing
  adversarial examples,'' \emph{arXiv preprint arXiv:1412.6572}, 2014.

\bibitem{jia2022adversarial}
X.~Jia, Y.~Zhang, B.~Wu, K.~Ma, J.~Wang, and X.~Cao, ``Las-at: adversarial
  training with learnable attack strategy,'' in \emph{Proceedings of the
  IEEE/CVF Conference on Computer Vision and Pattern Recognition}, 2022, pp.
  13\,398--13\,408.

\bibitem{xu2023exploring}
\BIBentryALTinterwordspacing
Y.~Xu, Y.~Sun, M.~Goldblum, T.~Goldstein, and F.~Huang, ``Exploring and
  exploiting decision boundary dynamics for adversarial robustness,'' in
  \emph{International Conference on Learning Representations}, 2023. [Online].
  Available: \url{https://arxiv.org/abs/2302.03015}
\BIBentrySTDinterwordspacing

\bibitem{pang2020bag}
T.~Pang, X.~Yang, Y.~Dong, H.~Su, and J.~Zhu, ``Bag of tricks for adversarial
  training,'' \emph{arXiv preprint arXiv:2010.00467}, 2020.

\bibitem{dong2022label}
C.~Dong, L.~Liu, and J.~Shang, ``Label noise in adversarial training: A novel
  perspective to study robust overfitting,'' \emph{Advances in Neural
  Information Processing Systems}, vol.~35, pp. 17\,556--17\,567, 2022.

\bibitem{andriushchenko2020understanding}
M.~Andriushchenko and N.~Flammarion, ``Understanding and improving fast
  adversarial training,'' \emph{Advances in Neural Information Processing
  Systems}, vol.~33, pp. 16\,048--16\,059, 2020.

\bibitem{zhang2022revisiting}
Y.~Zhang, G.~Zhang, P.~Khanduri, M.~Hong, S.~Chang, and S.~Liu, ``Revisiting
  and advancing fast adversarial training through the lens of bi-level
  optimization,'' in \emph{International Conference on Machine Learning}.\hskip
  1em plus 0.5em minus 0.4em\relax PMLR, 2022, pp. 26\,693--26\,712.

\bibitem{nakkiran2021deep}
P.~Nakkiran, G.~Kaplun, Y.~Bansal, T.~Yang, B.~Barak, and I.~Sutskever, ``Deep
  double descent: Where bigger models and more data hurt,'' \emph{Journal of
  Statistical Mechanics: Theory and Experiment}, vol. 2021, no.~12, p. 124003,
  2021.

\bibitem{li2020towards}
B.~Li, S.~Wang, S.~Jana, and L.~Carin, ``Towards understanding fast adversarial
  training,'' \emph{arXiv preprint arXiv:2006.03089}, 2020.

\bibitem{liu2020loss}
C.~Liu, M.~Salzmann, T.~Lin, R.~Tomioka, and S.~S{\"u}sstrunk, ``On the loss
  landscape of adversarial training: Identifying challenges and how to overcome
  them,'' \emph{Advances in Neural Information Processing Systems}, vol.~33,
  pp. 21\,476--21\,487, 2020.

\bibitem{croce2020reliable}
F.~Croce and M.~Hein, ``Reliable evaluation of adversarial robustness with an
  ensemble of diverse parameter-free attacks,'' in \emph{International
  conference on machine learning}.\hskip 1em plus 0.5em minus 0.4em\relax PMLR,
  2020, pp. 2206--2216.

\bibitem{liao2018defense}
F.~Liao, M.~Liang, Y.~Dong, T.~Pang, X.~Hu, and J.~Zhu, ``Defense against
  adversarial attacks using high-level representation guided denoiser,'' in
  \emph{Proceedings of the IEEE conference on computer vision and pattern
  recognition}, 2018, pp. 1778--1787.

\bibitem{jiao2023pearl}
X.~Jiao, Y.~Liu, J.~Gao, X.~Chu, R.~Liu, and X.~Fan, ``Pearl: Preprocessing
  enhanced adversarial robust learning of image deraining for semantic
  segmentation,'' \emph{ACM Multimedia}, 2023.

\bibitem{dvijotham2018training}
K.~Dvijotham, S.~Gowal, R.~Stanforth, R.~Arandjelovic, B.~O'Donoghue,
  J.~Uesato, and P.~Kohli, ``Training verified learners with learned
  verifiers,'' \emph{arXiv preprint arXiv:1805.10265}, 2018.

\bibitem{wong2018provable}
E.~Wong and Z.~Kolter, ``Provable defenses against adversarial examples via the
  convex outer adversarial polytope,'' in \emph{International conference on
  machine learning}.\hskip 1em plus 0.5em minus 0.4em\relax PMLR, 2018, pp.
  5286--5295.

\bibitem{wong2020fast}
E.~Wong, L.~Rice, and J.~Z. Kolter, ``Fast is better than free: Revisiting
  adversarial training,'' \emph{arXiv preprint arXiv:2001.03994}, 2020.

\bibitem{wang2019improving}
Y.~Wang, D.~Zou, J.~Yi, J.~Bailey, X.~Ma, and Q.~Gu, ``Improving adversarial
  robustness requires revisiting misclassified examples,'' in
  \emph{International conference on learning representations}, 2019.

\bibitem{zhang2020attacks}
J.~Zhang, X.~Xu, B.~Han, G.~Niu, L.~Cui, M.~Sugiyama, and M.~Kankanhalli,
  ``Attacks which do not kill training make adversarial learning stronger,'' in
  \emph{International conference on machine learning}.\hskip 1em plus 0.5em
  minus 0.4em\relax PMLR, 2020, pp. 11\,278--11\,287.

\bibitem{carlini2022certified}
N.~Carlini, F.~Tramer, K.~D. Dvijotham, L.~Rice, M.~Sun, and J.~Z. Kolter,
  ``(certified!!) adversarial robustness for free!'' \emph{arXiv preprint
  arXiv:2206.10550}, 2022.

\bibitem{croce2021robustbench}
\BIBentryALTinterwordspacing
F.~Croce, M.~Andriushchenko, V.~Sehwag, E.~Debenedetti, N.~Flammarion,
  M.~Chiang, P.~Mittal, and M.~Hein, ``Robustbench: a standardized adversarial
  robustness benchmark,'' in \emph{Thirty-fifth Conference on Neural
  Information Processing Systems Datasets and Benchmarks Track}, 2021.
  [Online]. Available: \url{https://openreview.net/forum?id=SSKZPJCt7B}
\BIBentrySTDinterwordspacing

\bibitem{rice2020overfitting}
L.~Rice, E.~Wong, and Z.~Kolter, ``Overfitting in adversarially robust deep
  learning,'' in \emph{International Conference on Machine Learning}.\hskip 1em
  plus 0.5em minus 0.4em\relax PMLR, 2020, pp. 8093--8104.

\bibitem{kim2021understanding}
H.~Kim, W.~Lee, and J.~Lee, ``Understanding catastrophic overfitting in
  single-step adversarial training,'' in \emph{Proceedings of the AAAI
  Conference on Artificial Intelligence}, vol.~35, no.~9, 2021, pp. 8119--8127.

\bibitem{dong2021exploring}
Y.~Dong, K.~Xu, X.~Yang, T.~Pang, Z.~Deng, H.~Su, and J.~Zhu, ``Exploring
  memorization in adversarial training,'' \emph{arXiv preprint
  arXiv:2106.01606}, 2021.

\bibitem{pang2022robustness}
T.~Pang, M.~Lin, X.~Yang, J.~Zhu, and S.~Yan, ``Robustness and accuracy could
  be reconcilable by (proper) definition,'' in \emph{International Conference
  on Machine Learning}.\hskip 1em plus 0.5em minus 0.4em\relax PMLR, 2022, pp.
  17\,258--17\,277.

\bibitem{jandial2020retrospective}
S.~Jandial, A.~Chopra, M.~Sarkar, P.~Gupta, B.~Krishnamurthy, and
  V.~Balasubramanian, ``Retrospective loss: Looking back to improve training of
  deep neural networks,'' in \emph{Proceedings of the 26th ACM SIGKDD
  International Conference on Knowledge Discovery \& Data Mining}, 2020, pp.
  1123--1131.

\bibitem{izmailov2018averaging}
P.~Izmailov, D.~Podoprikhin, T.~Garipov, D.~Vetrov, and A.~G. Wilson,
  ``Averaging weights leads to wider optima and better generalization,''
  \emph{arXiv preprint arXiv:1803.05407}, 2018.

\bibitem{gowal2020uncovering}
S.~Gowal, C.~Qin, J.~Uesato, T.~Mann, and P.~Kohli, ``Uncovering the limits of
  adversarial training against norm-bounded adversarial examples,'' \emph{arXiv
  preprint arXiv:2010.03593}, 2020.

\bibitem{athiwaratkun2018there}
B.~Athiwaratkun, M.~Finzi, P.~Izmailov, and A.~G. Wilson, ``There are many
  consistent explanations of unlabeled data: Why you should average,''
  \emph{arXiv preprint arXiv:1806.05594}, 2018.

\bibitem{yang2019swalp}
G.~Yang, T.~Zhang, P.~Kirichenko, J.~Bai, A.~G. Wilson, and C.~De~Sa, ``Swalp:
  Stochastic weight averaging in low precision training,'' in
  \emph{International Conference on Machine Learning}.\hskip 1em plus 0.5em
  minus 0.4em\relax PMLR, 2019, pp. 7015--7024.

\bibitem{chan2020thinks}
A.~Chan, Y.~Tay, and Y.-S. Ong, ``What it thinks is important is important:
  Robustness transfers through input gradients,'' in \emph{Proceedings of the
  IEEE/CVF Conference on Computer Vision and Pattern Recognition}, 2020, pp.
  332--341.

\bibitem{li2018exploring}
H.~Li, ``Exploring knowledge distillation of deep neural nets for efficient
  hardware solutions,'' \emph{CS230 Report}, 2018.

\bibitem{kullback1951information}
S.~Kullback and R.~A. Leibler, ``On information and sufficiency,'' \emph{The
  annals of mathematical statistics}, vol.~22, no.~1, pp. 79--86, 1951.

\bibitem{sutskever2013importance}
I.~Sutskever, J.~Martens, G.~Dahl, and G.~Hinton, ``On the importance of
  initialization and momentum in deep learning,'' in \emph{International
  conference on machine learning}.\hskip 1em plus 0.5em minus 0.4em\relax PMLR,
  2013, pp. 1139--1147.

\bibitem{krizhevsky2009learning}
A.~Krizhevsky, G.~Hinton \emph{et~al.}, ``Learning multiple layers of features
  from tiny images,'' 2009.

\bibitem{deng2009imagenet}
J.~Deng, W.~Dong, R.~Socher, L.-J. Li, K.~Li, and L.~Fei-Fei, ``Imagenet: A
  large-scale hierarchical image database,'' in \emph{2009 IEEE conference on
  computer vision and pattern recognition}.\hskip 1em plus 0.5em minus
  0.4em\relax Ieee, 2009, pp. 248--255.

\bibitem{he2016identity}
K.~He, X.~Zhang, S.~Ren, and J.~Sun, ``Identity mappings in deep residual
  networks,'' in \emph{Computer Vision--ECCV 2016: 14th European Conference,
  Amsterdam, The Netherlands, October 11--14, 2016, Proceedings, Part IV
  14}.\hskip 1em plus 0.5em minus 0.4em\relax Springer, 2016, pp. 630--645.

\bibitem{zagoruyko2016wide}
S.~Zagoruyko and N.~Komodakis, ``Wide residual networks,'' \emph{arXiv preprint
  arXiv:1605.07146}, 2016.

\bibitem{liexploring}
H.~Li, ``Exploring knowledge distillation of deep neural.''

\bibitem{jiang2021layercam}
P.-T. Jiang, C.-B. Zhang, Q.~Hou, M.-M. Cheng, and Y.~Wei, ``Layercam:
  Exploring hierarchical class activation maps for localization,'' \emph{IEEE
  Transactions on Image Processing}, vol.~30, pp. 5875--5888, 2021.

\bibitem{ribeiro2016should}
M.~T. Ribeiro, S.~Singh, and C.~Guestrin, ``" why should i trust you?"
  explaining the predictions of any classifier,'' in \emph{Proceedings of the
  22nd ACM SIGKDD international conference on knowledge discovery and data
  mining}, 2016, pp. 1135--1144.

\end{thebibliography}

\newpage

%\section{Biography Section}
%If you have an EPS/PDF photo (graphicx package needed), extra braces are
% needed around the contents of the optional argument to biography to prevent
% the LaTeX parser from getting confused when it sees the complicated
% $\backslash${\tt{includegraphics}} command within an optional argument. (You can create
% your own custom macro containing the $\backslash${\tt{includegraphics}} command to make things
% simpler here.)
 
\vspace{11pt}

\begin{IEEEbiography}[{\includegraphics[width=1in,height=1.25in,clip,keepaspectratio]{biography/LiuYaohua.pdf}}]{Yaohua Liu} received his M.S. degree  in Software Engineering from Dalian University of Technology, China in 2021. He is currently pursuing the Ph.D. degree in Software Engineering at Dalian University of Technology (DUT), Dalian, China. His research interests include computer vision, bilevel-optimization, adversarial attack and defense, and deep learning. 
\end{IEEEbiography}
\vspace{-1.2cm}
\begin{IEEEbiography}[{\includegraphics[width=1in,height=1.25in,clip,keepaspectratio]{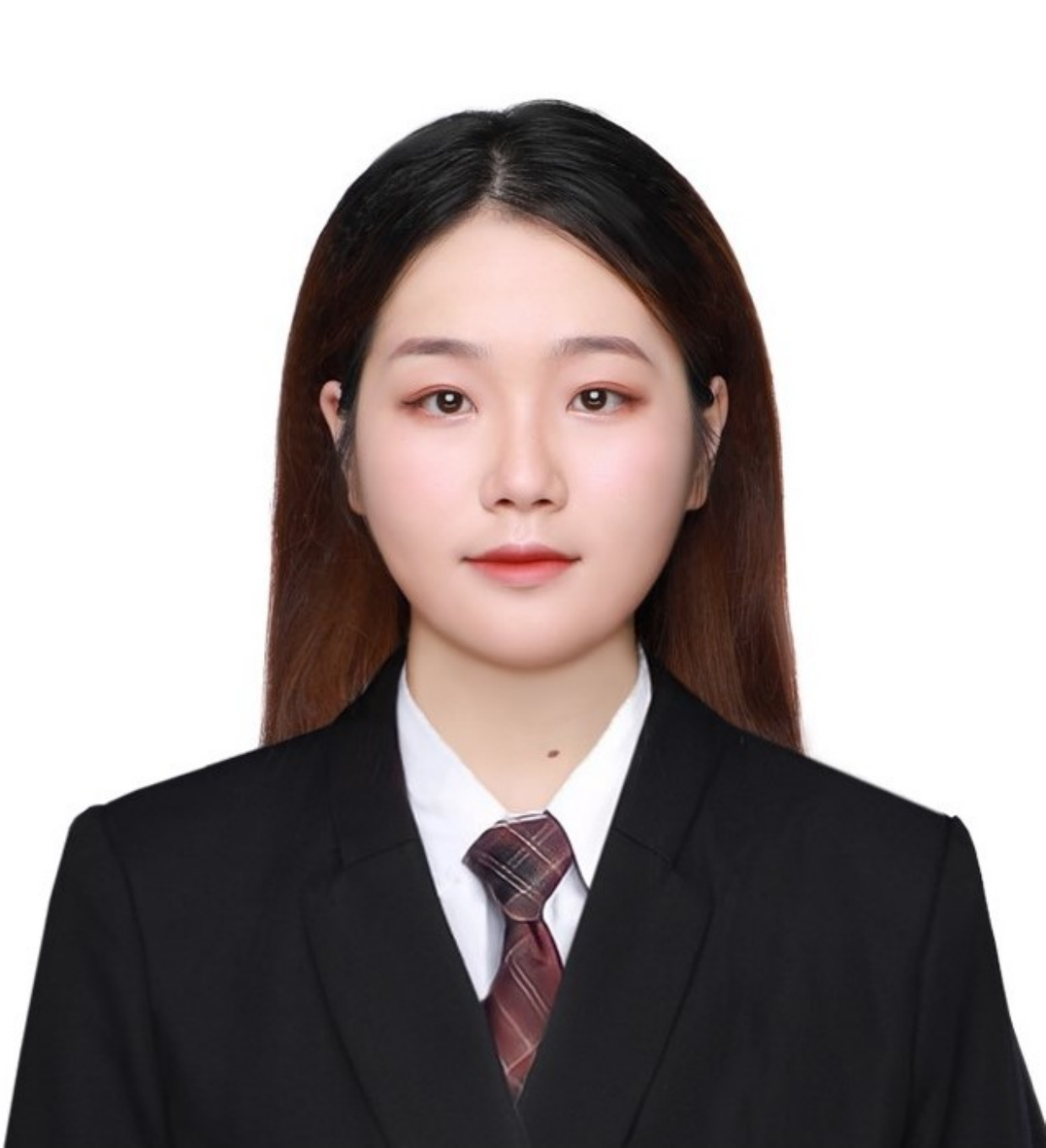}}]{Jiaxin Gao} received the B.S. degree in Applied Mathematics from Dalian University of Technology, China, in 2018. She is currently pursuing the PhD degree in software engineering at Dalian University of Technology, Dalian, China. She is with the Key Laboratory for Ubiquitous Network and Service Software of Liaoning Province, Dalian University of Technology, Dalian, China. Her research interests include computer vision, machine learning and optimization. 
\end{IEEEbiography}

\vspace{-1.3cm}
\begin{IEEEbiography}[{\includegraphics[width=1in,height=1.25in,clip,keepaspectratio]{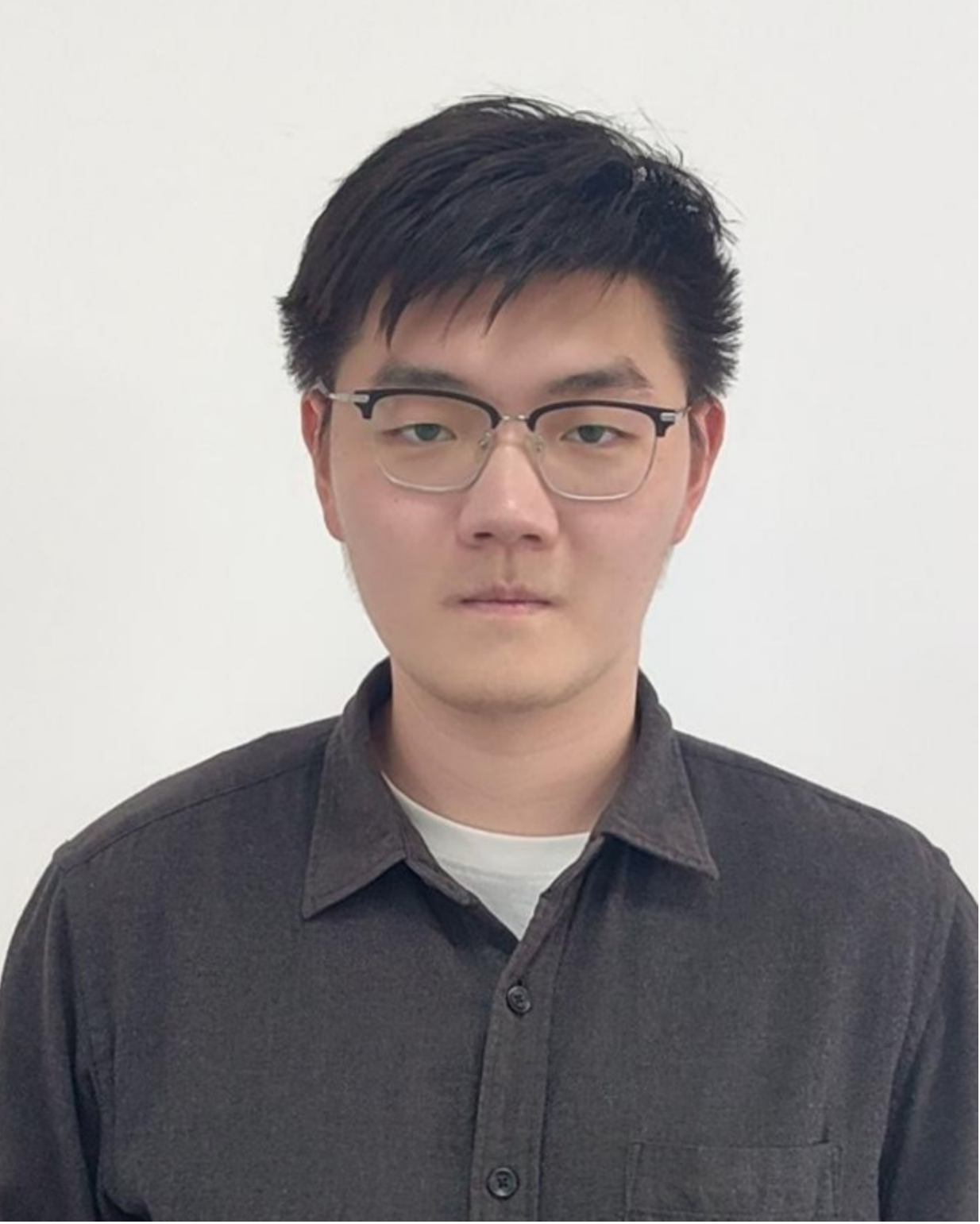}}]{Xianghao Jiao} received his B.S. degree in Software Engineering from Dalian University of Technology(DUT), Dalian, China, in 2021. He is currently pursuing a M.S. degree in Software Engineering at Dalian University of Technology(DUT), Dalian, China. His research interests include computer vision, adversarial defense, and bi-level optimization.
\end{IEEEbiography}
\vspace{-0.95cm}
\begin{IEEEbiography}[{\includegraphics[width=1in,height=1.25in,clip,keepaspectratio]{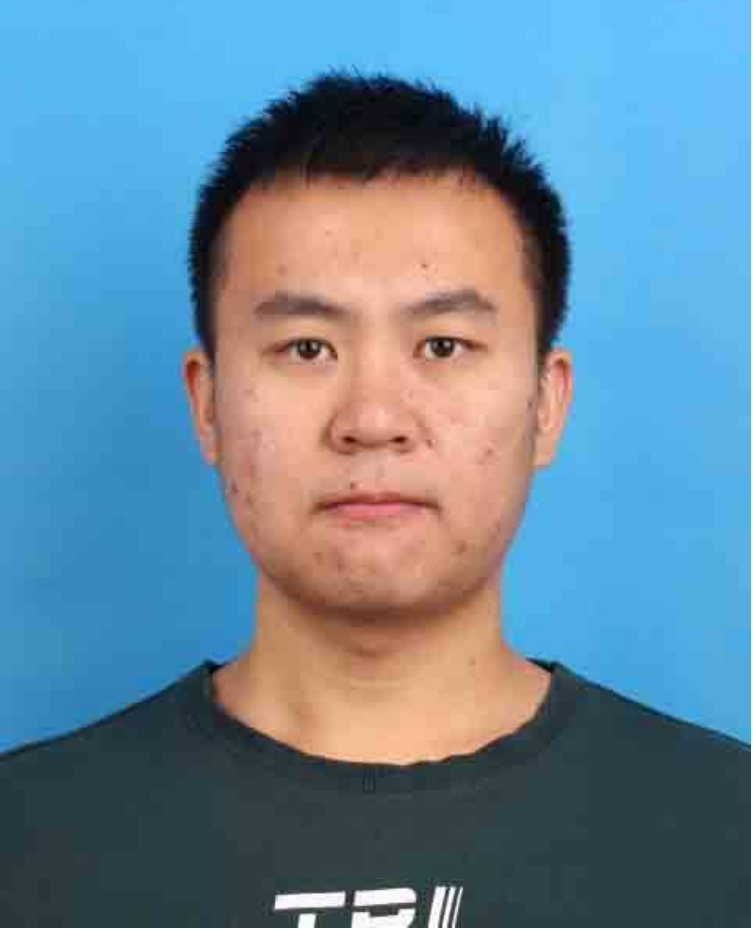}}]{Zhu Liu} received the B.S. degree in software	engineering from the Dalian University of Technology, Dalian, China, in 2019. He received his M.S. degree in Software Engineering at Dalian University of Technology, Dalian, China, in 2022. He is currently pursuing the Ph.D. degree in Software Engineering at Dalian University of Technology, Dalian, China. His research interests include optimization, image processing and fusion.
\end{IEEEbiography}
\vspace{-1.0cm}
\begin{IEEEbiography}[{\includegraphics[width=1in,height=1.25in,clip,keepaspectratio]{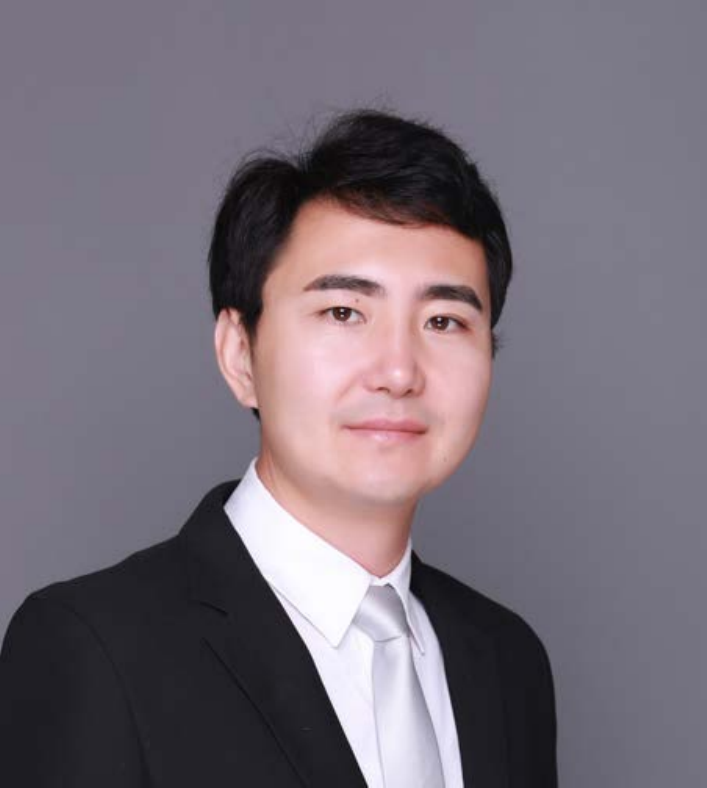}}] {Xin Fan} received the B.E. and Ph.D. degrees in information and communication engineering from Xian Jiaotong University, Xian, China, in 1998 and 2004, respectively. He was with Oklahoma State University, Stillwater, from 2006 to 2007, as a post-doctoral research fellow. He joined the School of Software, Dalian University of Technology, Dalian, China, in 2009. His current research interests include computational geometry and machine learning, and their applications to low-level image processing and DTI-MR image analysis.
\end{IEEEbiography}
\vspace{-1.2cm}
\begin{IEEEbiography}[{\includegraphics[width=1in,height=1.25in,clip,keepaspectratio]{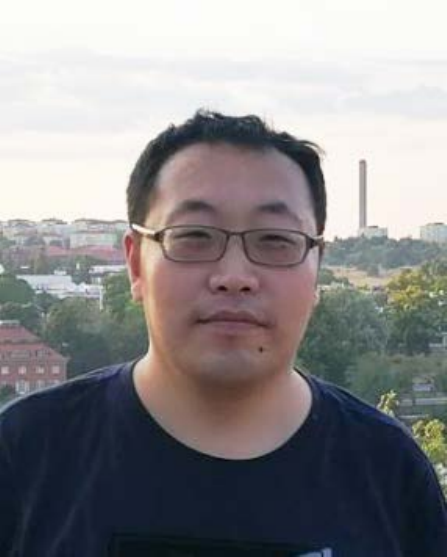}}]{Risheng Liu} received his B.Sc. (2007) and Ph.D. (2012) from Dalian University of Technology, China. From 2010 to 2012, he was doing research as joint Ph.D. in robotics institute at Carnegie Mellon University. From 2016 to 2018, He was doing research as Hong Kong Scholar at the Hong Kong Polytechnic University. He is currently a full professor of the School of Software Technology, Dalian University of Technology (DUT). He was awarded the ``Outstanding Youth Science Foundation" of the National Natural Science Foundation of China. He serves as Associate Editor for IEEE TCSVT, Pattern Recognition, Journal of Electronic Imaging, The Visual Computer, and IET Image Processing. His research interests include optimization, computer vision and deep learning.
\end{IEEEbiography}

%\vspace{11pt}
%
%\bf{If you will not include a photo:}\vspace{-33pt}
%\begin{IEEEbiographynophoto}{John Doe}
%Use $\backslash${\tt{begin\{IEEEbiographynophoto\}}} and the author name as the argument followed by the biography text.
%\end{IEEEbiographynophoto}

\vfill

\end{document}